\documentclass{article} 
\usepackage{iclr2027_conference,times}

\usepackage{amsmath,amsfonts,bm}

\def\eqref#1{\textup{(\ref{#1})}}

\def\1{\bm{1}}

\DeclareMathAlphabet{\mathsfit}{\encodingdefault}{\sfdefault}{m}{sl}
\SetMathAlphabet{\mathsfit}{bold}{\encodingdefault}{\sfdefault}{bx}{n}

\usepackage{amssymb} 
\usepackage{amsthm} 
\usepackage{booktabs}
\usepackage{multirow}
\usepackage{tabularx}
\usepackage{wrapfig}
\usepackage{algorithm}
\usepackage{algpseudocode}
\usepackage[table]{xcolor}
\usepackage{tcolorbox}
\newtheorem{theorem}{Theorem}
\newtheorem{lemma}{Lemma}
\newtheorem{example}{Example}
\usepackage{hyperref}
\usepackage{url}
\usepackage{titletoc}
\usepackage{graphicx}
\definecolor{benchmarkgroup}{RGB}{245,245,245}
\definecolor{adastephighlight}{RGB}{234,243,253}
\definecolor{promptplaceholder}{RGB}{150,66,5}
\definecolor{promptthought}{RGB}{0,105,48}
\definecolor{promptaction}{RGB}{112,44,166}
\newcommand{\promptfield}[1]{\textcolor{promptplaceholder}{\texttt{\{#1\}}}}

\title{AdaStep: Adaptive Step Credit Weighting for \\ Agentic Reinforcement Learning}

\author{
Xin Wang$^{1,3}$\thanks{Equal contribution.} \enspace Wenhao Wu$^{2,3}$\footnotemark[1] \enspace Menghao Zhang$^{3}$\footnotemark[1] \enspace Zhi Wang$^{2}$\thanks{Corresponding authors.} \enspace Kun Shao$^{3}$\footnotemark[2] \enspace Jian Luan$^{3}$\footnotemark[2] \\[0.5em]
\textnormal{$^{1}$Tsinghua University \quad $^{2}$Nanjing University} \quad \textnormal{$^{3}$Xiaomi Inc.} \\
\textnormal{$^{\dagger}$\texttt{zhiwang@nju.edu.cn}, \texttt{\{shaokun,luanjian\}@xiaomi.com}}
}

\hypersetup{
    hidelinks,
    pdftitle={AdaStep: Adaptive Step Credit Weighting for Agentic Reinforcement Learning},
    pdfauthor={Xin Wang, Wenhao Wu, Menghao Zhang, Zhi Wang, Kun Shao, Jian Luan}
}

\iclrfinalcopy
\begin{document}

\maketitle
\lhead{Preprint}

\begin{abstract}
Long-horizon LLM agents are typically trained with sparse outcome rewards, making trajectory-level objectives too coarse to distinguish the contribution of individual decisions.
Step-level credit assignment provides finer-grained supervision, but its estimates can be unreliable because observed returns also depend on subsequent actions, environment transitions, and trajectory length.
We propose \textbf{AdaStep}, an \textbf{Ada}ptive \textbf{Step}-credit weighting method that controls how strongly each group-derived local advantage modifies the trajectory-level signal.
We formulate this weighting as a mean-squared-error estimation problem for the latent step advantage and, under an explicit conditional sampling assumption, derive an optimal per-state shrinkage coefficient.
The coefficient admits a \textbf{signal-to-total-variance} interpretation: it preserves local credit when return variation is attributable to the selected action and suppresses it when variation is dominated by downstream randomness. 
AdaStep requires only lightweight scalar computation, with no critic, additional rollouts, or extra model inference.
Experiments with three model backbones on ALFWorld, WebShop, and ScienceWorld show \textbf{consistent improvements} over baselines at low computational cost.
\end{abstract}

\section{Introduction}

\begin{wrapfigure}[22]{r}{0.50\textwidth}
\vspace{-\intextsep}
\centering
\includegraphics[width=\linewidth]{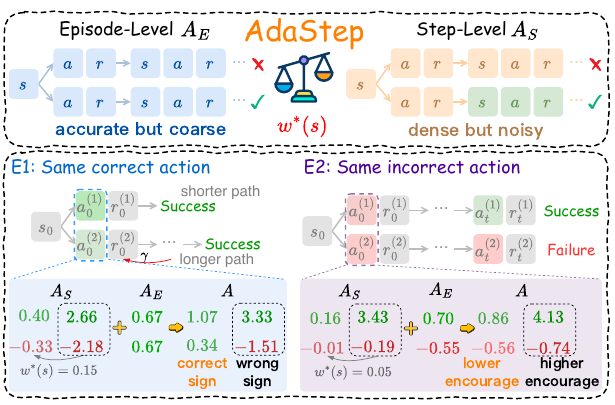}
\setlength{\abovecaptionskip}{4pt}
\caption{\small The episode-level advantage provides an accurate but coarse signal of trajectory outcomes, while the step-level advantage offers finer-grained but noisy local credit. In GiGPO, combining these signals with equal weights can assign negative advantages to useful actions or excessively large positive advantages to actions unrelated to success. AdaStep retains the episode-level signal and adaptively weights the local correction for each step-level group.}
\label{fig:intro}
\vspace{-2.81em}
\end{wrapfigure}

Large language models (LLMs) are increasingly moving from chat-based interaction to \emph{agentic} decision making~\citep{yao2023react,zhou2024archer,wang2026envrl,zheng2026pgpo}. Unlike a chat model, which primarily maps a prompt to a response, an agent repeatedly interacts with an external environment by observing states, taking actions, and receiving subsequent states and rewards. Completing a task therefore requires coordinating a sequence of decisions whose consequences may unfold over a long horizon. Meanwhile, group-based reinforcement learning has become an important paradigm for LLM post-training~\citep{yu2025dapo,min2026dhpo,zhang2026aero,liu2026luspo}, offering a simple, critic-free learning signal. Given multiple rollouts of the same task, the mean return serves as an empirical estimate of the task-level value, and each rollout's return relative to this mean provides an advantage estimate, without requiring a separately trained critic~\citep{shao2024deepseekmath,yu2025dapo}. Under binary success rewards, successful trajectories receive nonnegative advantages and failed trajectories receive nonpositive advantages. Sparse, delayed rewards make it difficult to identify which intermediate actions contributed to an eventual outcome~\citep{arjona2019rudder}. For agentic tasks, however, the long-horizon nature of decision making exposes a fundamental limitation of trajectory-level credit assignment in Group Relative Policy Optimization (GRPO)~\citep{shao2024deepseekmath}. By broadcasting the same outcome-based advantage to every state--action pair along a trajectory, GRPO provides a global but coarse learning signal: it does not distinguish actions that contributed to success from those that were irrelevant or erroneous, nor does it identify which decisions were responsible for failure. Consequently, useful actions in failed trajectories may be penalized, while unhelpful actions in successful trajectories may be reinforced.

To address this coarse credit assignment, Group in GRPO (GiGPO)~\citep{feng2025gigpo} augments the trajectory-level advantage with a local, step-level correction. It groups state--action pairs sharing the same anchor state across rollouts and uses the group's mean discounted return as an empirical state-value baseline. Each sampled return is then compared with this baseline to obtain a step-level relative advantage. This construction provides finer-grained credit assignment while preserving critic-free estimation. However, these local advantages are estimated from finite samples, and their credibility can vary across step-level groups~\citep{li2026when}.
Step-level credit provides finer supervision, but its estimates are
affected by downstream randomness. Even for the same state--action pair,
stochastic transitions, subsequent actions, and discounted reward timing
can produce advantage estimates with opposite signs. Strong local
corrections may therefore amplify estimation noise, whereas weak
corrections may suppress useful action information. This trade-off
connects to variance control in advantage
estimation~\citep{schulman2015gae} and shrinkage-based reward-baseline
estimation with limited rollouts~\citep{zeng2025shrinking}.

GiGPO~\citep{feng2025gigpo} adds a local step-level correction to the global
trajectory-level advantage using a fixed coefficient. However, this
correction can differ in credibility across anchor-state groups.
In some groups, return differences mainly reflect differences in action
quality, making the local estimates useful for guiding policy updates.
In others, much of the return variation arises from subsequent decisions
or stochastic transitions, even when the same action is taken.
Such variation makes the local estimates less credible.
A fixed coefficient cannot adapt to these differences: a large weight
may amplify noisy corrections, while a small weight may weaken useful
local signals. We propose \textbf{AdaStep}, which retains the global
advantage as a reference and adapts the local step-level correction
strength to each group's estimation credibility, as illustrated in
Figure~\ref{fig:intro}. We derive the coefficient under a mean-squared-error
objective and estimate it from existing rollout statistics, requiring
no critic, additional rollouts, or extra model forward passes.

Our main contributions are as follows:
\begin{enumerate}
    \item \textbf{An estimation formulation of local credit assignment.}
    Taking the trajectory-level advantage as a reference, we formulate local advantage weighting as an estimation problem for the true step-level advantage. This formulation makes explicit the trade-off between preserving useful local signals and suppressing sampling error.

    \item \textbf{A principled correction coefficient for each group.}
    Under the proposed objective and stated assumptions, we derive the optimal correction coefficient and obtain a computationally tractable expression that adapts the strength of local correction to the estimation credibility of each step-level group.

    \item \textbf{Empirical gains across three benchmarks.}
    Experiments on three agent-task benchmarks demonstrate that the proposed correction improves performance, supporting the value of accounting for local estimation credibility in group-relative credit assignment. AdaStep tops all trained baselines, gaining up to +9.36 points over GiGPO.

    \item \textbf{Minimal computational overhead.}
    The method requires only a lightweight coefficient computation for each step-level group and integrates directly into the existing advantage calculation with minimal additional time and memory costs, adding about 1\% credit-computation time.
\end{enumerate}

\section{Related Work}

\textbf{Reinforcement learning for large language models.}
Beyond preference alignment through RLHF, reinforcement learning is increasingly used to enhance LLM reasoning and decision making. Group-based methods, including RLOO~\citep{ahmadian2024rloo}, GRPO~\citep{shao2024deepseekmath}, Dr.~GRPO~\citep{liu2025drgrpo}, DAPO~\citep{yu2025dapo}, and CPPO~\citep{lin2025cppo}, provide scalable alternatives to PPO~\citep{schulman2017ppo}. These methods estimate relative advantages from multiple responses to the same prompt, eliminating the need for a separately trained value function and reducing training costs. Recent work further advances this framework through hybrid token--sequence optimization in DHPO~\citep{min2026dhpo}, adaptive rollout selection in AERO~\citep{zhang2026aero}, and length-unbiased sequence-level optimization in LUSPO~\citep{liu2026luspo}, emphasizing efficiency, robustness, and bias reduction in LLM post-training.

\textbf{Step-level advantage assignment.}
GiGPO estimates step-level advantages by grouping state--action pairs with shared anchor states across rollouts. HGPO~\citep{he2026hgpo} and ProGPO~\citep{fan2026progpo} extend this idea through additional grouping mechanisms, introducing further computational costs. Critic-based approaches such as CAPO~\citep{xie2025capo} and SAPO~\citep{liang2026sapo} incur the time and memory costs of training a critic, while EnvRL~\citep{wang2026envrl} adds auxiliary supervision through state prediction and inverse dynamics. PGPO~\citep{zheng2026pgpo} refines step-level advantage signals but does not explicitly address how to control their correction strength. ECPO~\citep{li2026when} constructs heuristically motivated calibrated action-level advantages, whereas AdaStep retains the occurrence-level estimator and derives its state-specific scaling from an explicit estimation objective. Our analysis accounts for the action-conditional bias induced by the shared group baseline, while exploiting the cancellation of this bias after marginalizing over actions.

\section{Preliminaries}
\label{sec:prelim}

\subsection{Problem setup}
\label{sec:setup}

We consider a general setting in which an LLM agent interacts with an environment to
complete multi-step tasks described by a task prompt $x \sim p(X)$. We model the
environment as a Markov decision process (MDP)
$\mathcal{M} = (\mathcal{S}, \mathcal{A}, P, r, \gamma)$, where
$\mathcal{S}$ is the state space of the environment, $\mathcal{A}$ is the action
space, $P(s_{t+1}\mid s_t,a_t)$ is the state
transition model, $r$ is the reward function, and $\gamma$ is the discount factor. At each discrete time
step $t = 1, 2, \dots, T$, the environment is in a state $s_t \in \mathcal{S}$ and
the agent generates a textual action
$a_t \in \mathcal{A}_t$, where in this text world the action space is
$\mathcal{A}_t = \mathcal{V}^{n}$, with $\mathcal{V}$ the token vocabulary and $n$ the
maximum generation length. In text worlds, $\mathcal{S}$ is typically a finite discrete set whose elements consist of text. After executing $a_t$, the environment returns a scalar
reward $r_t \in \mathbb{R}$ and transitions to the next state $s_{t+1}$ according to
$P$. A full
episode consists of a trajectory
$\tau = \big\{ (s_1,a_1,r_1), (s_2,a_2,r_2), \dots,
(s_T,a_T,r_T) \big\}$ with horizon $T$. The agent's behavior is governed by an LLM policy
$\pi_\theta(a_t \mid s_t,x)$ parameterized by $\theta$, which defines a distribution over
textual outputs conditioned on the task prompt $x$ and the current state $s_t$.

In many realistic scenarios the environment provides sparse or delayed rewards (e.g., success and failure signals at the end of an episode). 
Because the agent generates $T$ consecutive textual actions $(a_1, \dots, a_T)$, each potentially spanning thousands of tokens, assigning credit to individual tokens over the course of an episode is particularly challenging.
In this paper we focus on tasks whose quality is measured by a binary \emph{success} signal: the agent receives no reward before the final step ($r_t = 0$ for all $t < T$), and $r_T = 1$ only upon success in the final state, $r_T = 0$ otherwise. 
Therefore, for a trajectory $\tau_i$, the episode return is given by $R_E(\tau_i) = \sum_{t=1}^{T} r_{i,t}$, taking the value $1$ if and only if the trajectory succeeds, and $0$ otherwise.
\subsection{GRPO and GIGPO}
\label{sec:grpo}

GRPO \citep{shao2024deepseekmath} removes the value network that PPO relies on and
instead computes advantages by contrasting returns within a group of rollouts that share
the same prompt. For a prompt $x$ it samples $n$ trajectories $\{\tau_{1}, \tau_{2},\cdots,\tau_n\}$ and sets
\begin{equation}
A_{E}(\tau_i) = \frac{R_E(\tau_i) - \text{mean} \left(  \{R(\tau_j)\}_{j=1}^{N} \right)  }{\operatorname{std}(\{R(\tau_j)\}_{j=1}^{N})}.
\label{eq:grpo-adv}
\end{equation}
In GRPO, the episode return $R_E(\tau_i)$ is the ground-truth signal, and the group-relative advantage $A_{E}$ computed for the trajectory $\tau_i$ is a single scalar applied to the entire trajectory $\tau_i$; hence, for every state-action pair $(\boldsymbol{s}_t ^{(i)}, \boldsymbol{a}_t ^{(i)}) \in \tau_i$, we have $A_E(\boldsymbol{s}_t ^{(i)}, \boldsymbol{a}_t ^{(i)}) = A_E(\tau_i)$. However, this advantage with respect to the state-action pair only captures the global information of the episode, namely, whether it succeeded and is therefore relatively coarse.

GiGPO aims to address the problem that GRPO's signal is overly coarse, seeking fine-grained advantages that capture the contribution of individual actions to success.  For the state $s_{i,t}$ in trajectory $\tau_i$, its state return is the discounted return obtained from that state onward:
$R_S(s_{i,t}) = \sum_{k=t}^{T} \gamma^{k-t} r_{i,k}$
Under the sparse setting, $r_{i,k} = 0$ for all $k < T$, while $r_{i,T} \in \{0,1\}$, so $R_S(s_{i,t}) = \gamma^{T-t} r_{i,T}$.

Specifically, for a state set $\mathcal{S} = \{\tilde{\boldsymbol{s}}_1, \cdots, \tilde{\boldsymbol{s}}_K\}$ of size $K$, on top of GRPO's group rollouts within the same task, GiGPO matches states via a hash map and aggregates identical states together with their subsequent actions and returns. Within a cluster, the same state is referred to as an \emph{anchor state}. The actions and returns immediately following the states in a cluster form a step-level group; that is,
\begin{equation*}
\begin{aligned}
\mathcal{G}(\tilde{\boldsymbol{s}}_k) = \{(\boldsymbol{a}_t ^{(i)}, R_t ^{(i)}): (\boldsymbol{s}_t,\boldsymbol{a}_t ^{(i)}) \in  \tau_i, i \in [1:N], t \in [1:T]\}
\end{aligned}
\end{equation*}
Based on this, the step-level group advantage of an action is computed as
\begin{equation}
  \label{eq:A_GiGPO}
\begin{aligned}
A_{S}(\boldsymbol{s}_t ^{(i)},\boldsymbol{a}_t ^{(i)}) = \frac{R_t ^{(i)} - \text{mean}(  \{R_t ^{(j)} | (\boldsymbol{a}_t ^{(j)}, R_t ^{(j)}) \in \mathcal{G}(\tilde{\boldsymbol{s}}_k) \} )}{\text{std}(  \{R_t ^{(j)} | (\boldsymbol{a}_t ^{(j)}, R_t ^{(j)}) \in \mathcal{G}(\tilde{\boldsymbol{s}}_k) \} )}
\end{aligned}
\end{equation}
Therefore, in GiGPO, the advantage of an action is
\begin{equation*}
\begin{aligned}
A_G(\boldsymbol{s}_t ^{(i)}, \boldsymbol{a}_t ^{(i)}) = A_E(\boldsymbol{s}_t ^{(i)}, \boldsymbol{a}_t ^{(i)}) + \omega A_S (\boldsymbol{s}_t ^{(i)}, \boldsymbol{a}_t ^{(i)})
\end{aligned}
\end{equation*}
where $\omega \geqslant 0$ is the coefficient that controls the overall strength of the step-level relative advantage. In summary, the episode-level relative advantage provides a global reference, while the step-level relative advantage supplies an additional local correction.

\section{Methods}
\label{sec:problem}

\begin{figure}[t]
\centering
\includegraphics[width=\linewidth]{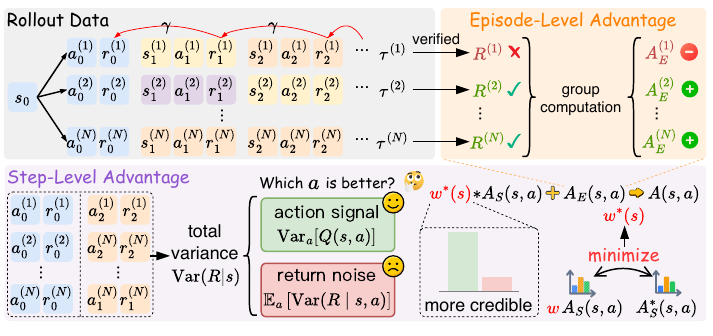}
\caption{Overview of AdaStep. Rollout trajectories provide both episode-level returns and step-level groups. The episode-level relative advantage $A_E$ supplies a global success/failure signal, while the step-level relative advantage $A_S$ provides local credit but is affected by return noise. AdaStep estimates the reliability of $A_S$ at each anchor state $\boldsymbol{s}$ using the fraction of return variance explained by action-dependent value differences, yielding the per-state weight $w^{*}(\boldsymbol{s})$. The final advantage combines the two signals as $A(\boldsymbol{s},\boldsymbol{a}) = A_E(\boldsymbol{s},\boldsymbol{a}) + w^{*}(\boldsymbol{s})A_S(\boldsymbol{s},\boldsymbol{a})$.}
\label{fig:method}
\end{figure}

\subsection{Motivation}

In policy-gradient methods, for a state-action pair with positive advantage, the gradient update increases its probability under the new policy relative to the old policy, and decreases it for pairs with negative advantage. The problem is that GRPO's global trajectory-level signal is too coarse: it reflects only whether the entire trajectory succeeds and carries no information about which decisions caused the task to succeed or fail. GiGPO's results show that adding a local advantage correction to the global advantage does improve performance. However, step-level advantages are based on the state-action return of the current trajectory, which is affected by randomness in the subsequent trajectory and therefore cannot always faithfully reflect the true advantage of a state-action pair.

\begin{example}
A task requires purchasing meat snacks with the \texttt{three meat} flavor and the \texttt{3 pack} option. At the same product-detail-page state, two trajectories both execute \texttt{click[three meat]}, correctly selecting the requested flavor. The shorter trajectory then selects the pack size and completes the purchase in 8 steps. The longer trajectory also selects the correct pack size but returns to the search results and browses additional pages before succeeding in 23 steps. Due to return discounting and within-group comparisons, GiGPO assigns advantages of $3.33$ and $-1.51$ to this same correct action in the shorter and longer trajectories, respectively. This negative advantage is misleading: the action is correct in both cases, and the difference arises from stochastic subsequent behavior rather than from the action itself. Our method assigns advantages of $1.07$ and $0.35$, respectively, correcting the sign in the longer trajectory and reducing the discrepancy between the advantages assigned to the same state--action pair.
\end{example}

\begin{example}
A task requires placing a cooled pot on a stove. At the state described by ``You arrive at sinkbasin 1. On the sinkbasin 1, you see a apple 1, a egg 1, and a lettuce 1'', multiple trajectories execute \texttt{clean pot 1 with sinkbasin 1}. However, cleaning the pot does not cool it. Within one group, this action occurs four times. One trajectory succeeds through subsequent correct actions and receives a positive return, whereas the other three have zero return. GiGPO assigns the cleaning action an advantage of $4.13$ in the successful trajectory and $-0.74$ in each of the other three. This positive credit is misleading because the cleaning action itself does not satisfy the cooling requirement; success results from later actions. Our method changes these advantages to $0.86$ and $-0.56$, respectively.
\end{example}

The local step-level signal is therefore valuable but not fully credible, and how strongly to trust it (i.e., how to control the strength of the step-level correction) is an important question.
Throughout this section we write $\boldsymbol{s}$ for the \emph{anchor state} of GiGPO, the state on which a step-level group $\mathcal{G}(\boldsymbol{s})$ is formed (Section~\ref{sec:grpo}). Figure~\ref{fig:method} gives an overview of AdaStep, and the complete training procedure is summarized in Algorithm~\ref{alg:adastep} (Appendix~\ref{sec:adastep-pseudocode}).

\subsection{Optimal per-state shrinkage}
\label{sec:optimal-shrinkage}

We assume that, before the advantage is normalized by its standard deviation, for each state--action pair in a task, there exists a ground-truth step-level advantage $A_S^{*}(\boldsymbol{s},\boldsymbol{a})$ that we would like to recover but cannot obtain exactly. In practice, within the step-level group $\mathcal{G}(\boldsymbol{s})$ of an anchor state $\boldsymbol{s}$, we estimate it using the unnormalized form of Eq.~\eqref{eq:A_GiGPO}, obtaining a noisy estimate
\begin{equation}
A_S(\boldsymbol{s},\boldsymbol{a}^{(i)}) = A_S^{*}(\boldsymbol{s},\boldsymbol{a}^{(i)}) + \eta_i,
\label{eq:noisy-estimate}
\end{equation}
where $\eta_i$ is the estimation noise associated with the trajectory $\tau_i$ in the group. Because the step-level advantage is computed \emph{within} a group, we seek a per-group weight that rescales the group's estimate so as to best approximate the true advantage. For each anchor state $\boldsymbol{s}$ we therefore solve
\begin{equation}
w^{*}(\boldsymbol{s}) = \arg\min_{w\in\mathbb{R}}\; \mathbb{E}_{\boldsymbol{a}^{(i)}\sim\pi(\cdot\mid\boldsymbol{s})}\!\left[\bigl(w\,A_S(\boldsymbol{s},\boldsymbol{a}^{(i)}) - A_S^{*}(\boldsymbol{s},\boldsymbol{a}^{(i)})\bigr)^{2} \;\middle|\; \boldsymbol{s}\right].
\label{eq:ms-objective}
\end{equation}
This objective asks: among all scalars $w$ applied to the \emph{estimate} $A_S$, which one lands closest, in mean square over the action distribution at $\boldsymbol{s}$, to the \emph{truth} $A_S^{*}$? The solution is a standard projection, requiring only that the second moment of the estimate be strictly positive.

\begin{theorem}[Optimal per-state  weight]
\label{thm:w-star}
Fix an anchor state $\boldsymbol{s}$ and a step-level group $\mathcal{G}(\boldsymbol{s})$ of size $N_s \geqslant 2$ with the noise model in Eq.~\eqref{eq:noisy-estimate}, with a strictly positive second moment of the estimate, $\mathbb{E}[A_S(\boldsymbol{s},\boldsymbol{a})^{2}\mid\boldsymbol{s}] > 0$. Then the minimizer of Eq.~\eqref{eq:ms-objective} is
\begin{equation}
w^{*}(\boldsymbol{s}) = \frac{\mathbb{E}\!\left[A_S(\boldsymbol{s},\boldsymbol{a})\,A_S^{*}(\boldsymbol{s},\boldsymbol{a})\,\middle|\,\boldsymbol{s}\right]}{\mathbb{E}\!\left[A_S(\boldsymbol{s},\boldsymbol{a})^{2}\,\middle|\,\boldsymbol{s}\right]}.
\label{eq:w-star}
\end{equation}
\end{theorem}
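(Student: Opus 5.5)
The plan is to treat Eq.~\eqref{eq:ms-objective} as a one-dimensional least-squares problem in the scalar $w$. Conditioning on $\boldsymbol{s}$ throughout, write $m_2 = \mathbb{E}[A_S(\boldsymbol{s},\boldsymbol{a})^2\mid\boldsymbol{s}]$, $c = \mathbb{E}[A_S(\boldsymbol{s},\boldsymbol{a})A_S^{*}(\boldsymbol{s},\boldsymbol{a})\mid\boldsymbol{s}]$ and $m_2^{*} = \mathbb{E}[A_S^{*}(\boldsymbol{s},\boldsymbol{a})^2\mid\boldsymbol{s}]$. These expectations are taken over the action draw and over the sampling randomness that produces the noise $\eta_i$ in Eq.~\eqref{eq:noisy-estimate}. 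I will assume they are finite; this is implicit in the objective being well defined. Under that assumption, $c$ is finite by Cauchy--Schwarz, since $|c|\leqslant \sqrt{m_2 m_2^{*}}$.

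First, I expand the square by linearity of conditional expectation:
\begin{equation*}
J(w) := \mathbb{E}\!\left[\bigl(wA_S - A_S^{*}\bigr)^2\,\middle|\,\boldsymbol{s}\right] = m_2\,w^2 - 2c\,w + m_2^{*}.
\end{equation*}
So $J$ is a quadratic polynomial in $w$ with leading coefficient $m_2$. The hypothesis $m_2>0$ makes $J$ strictly convex and coercive, so it has a unique minimizer. Setting $J'(w) = 2m_2 w - 2c = 0$ gives $w^{*}(\boldsymbol{s}) = c/m_2$, which is exactly Eq.~\eqref{eq:w-star}. Alternatively, completing the square gives
\begin{equation*}
J(w) = m_2\,(w - c/m_2)^2 + m_2^{*} - c^2/m_2,
\end{equation*}
which shows optimality and uniqueness directly and also yields the minimal risk.

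The noise model of Eq.~\eqref{eq:noisy-estimate} and the group-size condition $N_s\geqslant 2$ are not needed for the minimization itself. Their role is to guarantee that the unnormalized estimate $A_S$ is a nondegenerate random quantity, so that $m_2>0$ is a meaningful assumption. With $N_s=1$, the within-group centered estimate is identically zero, so $m_2=0$ and the problem degenerates.

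The only point needing care is integrability: the interchange of expectation and the polynomial expansion requires finite second moments of both $A_S$ and $A_S^{*}$. I expect this to be the main, though mild, obstacle. I would handle it by noting that under the binary sparse reward of Section~\ref{sec:setup}, every return lies in $[0,1]$. Hence the unnormalized $A_S$ and any reasonable $A_S^{*}$ (a difference of value functions) are bounded, and all moments are finite.
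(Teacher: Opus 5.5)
Your proposal is correct and is essentially the paper's proof. You expand $\mathbb{E}[(wA_S - A_S^{*})^2\mid\boldsymbol{s}]$ into a quadratic in $w$, use $\mathbb{E}[A_S^2\mid\boldsymbol{s}]>0$ for strict convexity, and solve the stationarity condition to get the projection coefficient $c/m_2$. Your completing-the-square identity (which also gives the minimal risk $m_2^{*}-c^2/m_2$) and your explicit finite-second-moment check are small refinements; the paper leaves integrability implicit here and only invokes finite return variance later, when deriving the computable form.
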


\noindent The numerator measures how well the estimate $A_S$ aligns with the truth $A_S^{*}$; the denominator is the second moment of the estimate and is the sole condition needed for the solution to be well defined. The proof is deferred to Appendix~\ref{sec:proofs}. Theorem~\ref{thm:w-star} is a population-level projection: it is exact but, as written, \emph{not yet computable}, since it requires the unknown $A_S^{*}$. We next replace it by an equivalent form that depends only on group statistics GiGPO already collects.

\subsection{Computable form from group statistics}
\label{sec:computable}

To make the optimal weight in Eq.~\eqref{eq:w-star} tractable, we use the definition of step-level group relative advantage in Eq.~\eqref{eq:A_GiGPO}. Within a group $\mathcal{G}(\boldsymbol{s})$ of size $N_s$, the step-level advantage without standard-deviation normalization is:
\begin{equation}
A_S(\boldsymbol{s},\boldsymbol{a}^{(i)}) = R^{(i)} - \hat{V}(\boldsymbol{s}), \qquad \hat{V}_{N_s}(\boldsymbol{s}) = \frac{1}{N_s}\sum_{j\in\mathcal{G}(\boldsymbol{s})} R^{(j)},
\label{eq:step-adv}
\end{equation}
while, under a fixed policy $\pi$, the ground truth it estimates is
\begin{equation}
A_S^{*}(\boldsymbol{s},\boldsymbol{a}^{(i)}) = Q(\boldsymbol{s},\boldsymbol{a}^{(i)}) - V(\boldsymbol{s}), \qquad V(\boldsymbol{s}) = \mathbb{E}_{\boldsymbol{a} \sim \pi(\cdot|\boldsymbol{s})}[Q(\boldsymbol{s},\boldsymbol{a})\mid\boldsymbol{s}] = \mathbb{E}_{\pi}[R\mid\boldsymbol{s}],
\label{eq:true-adv}
\end{equation}
Therefore, in the computation, GiGPO uses the mean of the returns within a group as the estimate of the state-value function, and only a single return as the estimate of the state-action value, where $Q(\boldsymbol{s},\boldsymbol{a}) = \mathbb{E}_{\pi}[R\mid\boldsymbol{s},\boldsymbol{a}]$ is the action-value function. Therefore, the estimation noise of the advantage arises from two sources: the estimation of $Q$ and the estimation of $V$. Furthermore, by computing the estimation error $\eta_i = A_S(\boldsymbol{s},\boldsymbol{a}^{(i)}) - A_S^{*}(\boldsymbol{s},\boldsymbol{a}^{(i)})$, we obtain
\begin{equation}
\eta_i = \frac{N_s-1}{N_s}\bigl(R^{(i)} - Q(\boldsymbol{s},\boldsymbol{a}^{(i)})\bigr) \;-\; \frac{A_S^{*}(\boldsymbol{s},\boldsymbol{a}^{(i)})}{N_s} \;-\; \frac{1}{N_s}\sum_{j\neq i}\bigl(R^{(j)} - V(\boldsymbol{s})\bigr).
\label{eq:eta-decomp}
\end{equation}
The left term is the sampling noise of the current rollout---the realized return minus its conditional mean; the middle term is a $1/N_s$ shrinkage of the true step advantage, an artifact of estimating the baseline with a finite group that includes sample $i$ itself; the right term is the average deviation of the \emph{other} group members from the state value. Taking conditional expectations, the left and right terms vanish and only the middle term survives: $\mathbb{E}_{i}[\eta_i\mid\boldsymbol{s},\boldsymbol{a}^{(i)}] = -A_S^{*}(\boldsymbol{s},\boldsymbol{a}^{(i)})/N_s.$
Conditioned on both $\boldsymbol{s}$ and the action, the estimate carries a $1/N_s$ bias; and the marginalize over the action this bias disappears, since $\mathbb{E}_{\boldsymbol{a}}[A_S^{*}\mid\boldsymbol{s}] = 0$ by definition of $V(\boldsymbol{s})$,$
\mathbb{E}_{i,\boldsymbol{a}\sim\pi(\cdot\mid\boldsymbol{s})}[\eta_i\mid\boldsymbol{s}] = 0.$
Thus within a step group the step-level estimate is \emph{unbiased} for the true step advantage---reweighting, rather than sign-flipping or discarding, is the right intervention.

Furthermore, we assume that within a group $\mathcal{G}(\boldsymbol{s})$, the action--return pairs $(\boldsymbol{a}^{(j)},R^{(j)})$ are independent and identically distributed conditional on $\boldsymbol{s}$, and that the return variance within a group is finite. Under these two practical assumptions, the numerator and denominator of the optimal weight in Eq.~\eqref{eq:w-star} admit the following closed forms:

\begin{lemma}
\label{lem:closed-form}
Assume the action--return pairs $(\boldsymbol{a}^{(j)},R^{(j)})$ are i.i.d.\ conditional on $\boldsymbol{s}$ with finite return variance, Then the numerator and denominator of~\eqref{eq:w-star} admit the closed forms
\begin{equation*}
\begin{aligned}
\mathbb{E}[A_S\,A_S^{*}\mid\boldsymbol{s}] = \frac{N_s-1}{N_s}\,\operatorname{Var}_{\boldsymbol{a}\sim\pi(\cdot\mid\boldsymbol{s})}\!\bigl[Q(\boldsymbol{s},\boldsymbol{a})\bigr], 
\mathbb{E}[A_S^{2}\mid\boldsymbol{s}] = \frac{N_s-1}{N_s}\,\operatorname{Var}(R\mid\boldsymbol{s}).
\end{aligned}
\end{equation*}
\end{lemma}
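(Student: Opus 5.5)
The plan is a direct second-moment computation using only the i.i.d.\ assumption and the decomposition in Eq.~\eqref{eq:eta-decomp}. Fix a group member $i$ and write $R^{(i)} - \hat V_{N_s}(\boldsymbol{s}) = \frac{N_s-1}{N_s}\bigl(R^{(i)}-V\bigr) - \frac{1}{N_s}\sum_{j\neq i}\bigl(R^{(j)}-V\bigr)$, where $V=V(\boldsymbol{s})$. Further split $R^{(i)}-V = \varepsilon_i + A_S^{*}(\boldsymbol{s},\boldsymbol{a}^{(i)})$ with $\varepsilon_i = R^{(i)} - Q(\boldsymbol{s},\boldsymbol{a}^{(i)})$. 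Then:
\begin{itemize}
\item $\mathbb{E}[\varepsilon_i\mid \boldsymbol{s},\boldsymbol{a}^{(i)}]=0$ by the definition of $Q$;
\item $\mathbb{E}_{\boldsymbol{a}}[A_S^{*}\mid\boldsymbol{s}]=0$ by the definition of $V$;
\item for $j\neq i$, conditional independence of $(\boldsymbol{a}^{(j)},R^{(j)})$ from $(\boldsymbol{a}^{(i)},R^{(i)})$ given $\boldsymbol{s}$ gives $\mathbb{E}[R^{(j)}-V\mid \boldsymbol{s},\boldsymbol{a}^{(i)},R^{(i)}]=0$.
\end{itemize}
Finite return variance ensures that every product below is integrable.

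\textbf{Numerator.} Multiply the decomposition by $A_S^{*}(\boldsymbol{s},\boldsymbol{a}^{(i)})$ and take expectations term by term.
\begin{itemize}
\item The cross term with the other members vanishes by independence and $\mathbb{E}[R^{(j)}-V\mid\boldsymbol{s}]=0$.
\item The term $\mathbb{E}[\varepsilon_i A_S^{*}]$ vanishes by the tower property, conditioning on $\boldsymbol{a}^{(i)}$.
\item The remaining term is $\frac{N_s-1}{N_s}\mathbb{E}[(A_S^{*})^2\mid\boldsymbol{s}]$, and $\mathbb{E}[(A_S^{*})^2\mid\boldsymbol{s}]=\operatorname{Var}_{\boldsymbol{a}}[Q(\boldsymbol{s},\boldsymbol{a})]$ because $Q-V$ has mean zero under $\pi(\cdot\mid\boldsymbol{s})$.
\end{itemize}
This gives the first identity.

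\textbf{Denominator.} Square the expression $\frac{N_s-1}{N_s}X_i - \frac1{N_s}\sum_{j\neq i}X_j$, where $X_j=R^{(j)}-V$ are i.i.d.\ with mean zero and variance $\sigma^2=\operatorname{Var}(R\mid\boldsymbol{s})$. All cross terms vanish by independence and zero mean, so the second moment is
\[
\Bigl(\tfrac{(N_s-1)^2}{N_s^2}+\tfrac{N_s-1}{N_s^2}\Bigr)\sigma^2=\tfrac{N_s-1}{N_s}\sigma^2.
\]
This is the familiar variance of a sample deviation from the sample mean.

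\textbf{Main obstacle.} The step needing most care is interpreting the expectations in Eq.~\eqref{eq:w-star}: they must be joint over the whole group (action and return of member $i$ and all other members), not just over $\boldsymbol{a}\sim\pi$ as the notation suggests. I would state explicitly that expectations are taken over the i.i.d.\ group given $\boldsymbol{s}$, and that by exchangeability the result is independent of the index $i$.

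With this interpretation, I would also note the consistency check $\operatorname{Var}(R\mid\boldsymbol{s})=\operatorname{Var}_{\boldsymbol{a}}[Q]+\mathbb{E}_{\boldsymbol{a}}[\operatorname{Var}(R\mid\boldsymbol{s},\boldsymbol{a})]$, the law of total variance. It shows the ratio of the two closed forms lies in $[0,1]$, matching the signal-to-total-variance interpretation, and it shows the positivity hypothesis of Theorem~\ref{thm:w-star} holds whenever $\operatorname{Var}(R\mid\boldsymbol{s})>0$ and $N_s\geqslant 2$.
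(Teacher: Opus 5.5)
Your proposal is correct and follows essentially the same route as the paper's proof. Both rewrite $A_S$ as $\frac{N_s-1}{N_s}R^{(i)}$ minus $\frac{1}{N_s}$ times the sum of the other returns, get the denominator from the variance of an independent sum, and get the numerator by conditioning on $\boldsymbol{a}^{(i)}$ and using the tower property; the paper packages your term-by-term split into the single identity $\mathbb{E}[A_S\mid\boldsymbol{s},\boldsymbol{a}^{(i)}]=\frac{N_s-1}{N_s}A_S^{*}$, and it likewise takes the expectations jointly over the whole sampled group given $\boldsymbol{s}$.
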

\noindent The common factor $(N_s-1)/N_s$ cancels in the ratio. A restatement and proof are given in Lemma~\ref{lem:appendix-closed-form} of Appendix~\ref{sec:proofs}.

We first express the optimal coefficient as a signal-to-(signal-plus-noise) ratio to make its credibility interpretation explicit. By the law of total variance, at a fixed state $\boldsymbol{s}$, the return variance decomposes into the variance of $Q(\boldsymbol{s},\boldsymbol{a})$ across actions $\boldsymbol{a}\sim\pi(\cdot\mid\boldsymbol{s})$ and the conditional return variance after fixing both $\boldsymbol{s}$ and $\boldsymbol{a}$, averaged over the same action distribution:
\begin{equation}
\operatorname{Var}(R\mid\boldsymbol{s}) = \operatorname{Var}_{\boldsymbol{a}\sim\pi(\cdot\mid\boldsymbol{s})}\!\bigl[Q(\boldsymbol{s},\boldsymbol{a})\bigr] + \mathbb{E}_{\boldsymbol{a}\sim\pi(\cdot\mid\boldsymbol{s})}\!\bigl[\operatorname{Var}(R\mid\boldsymbol{s},\boldsymbol{a})\bigr].
\label{eq:tva}
\end{equation}
The decomposition is proved in Lemma~\ref{lem:return-variance} of Appendix~\ref{sec:proofs}.

\begin{table}[t]
\centering
\caption{\textbf{Main results.} Performance across three benchmarks. Results are averaged over $3$ random seeds. The best and second-best values within each model block are shown in \textbf{bold} and \underline{underlined}, respectively.
$\Delta$ vs.\ GiGPO reports AdaStep minus GiGPO for the same model.}
\label{tab:main}
\footnotesize
\setlength{\tabcolsep}{1.0pt}
\renewcommand{\arraystretch}{1.08}
\begin{tabularx}{\textwidth}{@{}l>{\raggedright\arraybackslash}Xccccc@{}}
\toprule
\multirow{2}{*}{\textbf{Type}} & \multirow{2}{*}{\textbf{Method}}
& \multicolumn{2}{c}{\textbf{ALFWorld}} & \multicolumn{2}{c}{\textbf{WebShop}}
& \textbf{ScienceWorld} \\
\cmidrule(lr){3-4}\cmidrule(lr){5-6}\cmidrule(lr){7-7}
& & In-Success & Out-Success & Test Score & Success & Success \\
\midrule
\rowcolor{benchmarkgroup}
\multicolumn{7}{@{}l}{\textbf{\textit{Frontier Models}}} \\
Prompting & DeepSeek-V4.1-Flash (ReAct) & $80.60{\scriptstyle\pm 0.2}$ &  $88.83{\scriptstyle\pm 1.9}$ & $28.54{\scriptstyle\pm 1.8}$ & $25.13{\scriptstyle\pm 1.5}$ & $58.15{\scriptstyle\pm 4.3}$ \\
Prompting & GLM-5.2 (ReAct) & $96.36{\scriptstyle\pm 0.9}$ & $95.05{\scriptstyle\pm 1.2}$ & $28.62{\scriptstyle\pm 2.1}$ & $23.47{\scriptstyle\pm 1.8}$ & $82.03{\scriptstyle\pm 1.4}$ \\
\midrule
\rowcolor{benchmarkgroup}
\multicolumn{7}{@{}l}{\textbf{\textit{Model: Qwen3-1.7B}}} \\
Prompting & Pure model &  $9.64{\scriptstyle\pm 1.2}$ &  $11.46{\scriptstyle\pm 2.4}$ & $53.38{\scriptstyle\pm3.1}$ & $5.34{\scriptstyle\pm0.8}$ & $9.38{\scriptstyle\pm2.1}$ \\
Prompting & ReAct~\citep{yao2023react} &  $19.79{\scriptstyle\pm 1.6}$ & $20.05{\scriptstyle\pm 1.8}$ & $33.09{\scriptstyle\pm0.8}$ & $8.72{\scriptstyle\pm1.6}$ & $9.90{\scriptstyle\pm1.6}$\\
Prompting & Reflexion~\citep{shinn2023reflexion} & $52.02{\scriptstyle\pm 1.9}$ & $51.30{\scriptstyle\pm 3.1}$ & $37.26{\scriptstyle\pm1.8}$ & $22.66{\scriptstyle\pm3.6}$ & $18.49{\scriptstyle\pm0.7}$ \\
RL Training & PPO~\citep{schulman2017ppo} & $45.57{\scriptstyle\pm1.6}$ & $42.44{\scriptstyle\pm3.2}$ & $76.77{\scriptstyle\pm3.7}$ & $61.10{\scriptstyle\pm4.4}$ & $21.61{\scriptstyle\pm5.5}$ \\
RL Training & RLOO~\citep{ahmadian2024rloo} & $66.41{\scriptstyle\pm8.2}$ & $53.90{\scriptstyle\pm2.1}$ & $63.23{\scriptstyle\pm0.4}$ & $37.60{\scriptstyle\pm3.6}$ & $18.23{\scriptstyle\pm1.2}$ \\
RL Training & GRPO~\citep{shao2024deepseekmath} & $71.09{\scriptstyle\pm0.8}$ & $\underline{64.58}{\scriptstyle\pm4.3}$ & $71.00{\scriptstyle\pm0.6}$ & $38.83{\scriptstyle\pm3.1}$ & $19.79{\scriptstyle\pm1.2}$ \\
RL Training & GiGPO~\citep{feng2025gigpo} & $71.10{\scriptstyle\pm3.4}$ & $64.06{\scriptstyle\pm0.0}$ & $\underline{80.61}{\scriptstyle\pm0.8}$ & $\underline{64.58}{\scriptstyle\pm0.8}$ & $20.57{\scriptstyle\pm1.8}$ \\
RL Training & HGPO~\citep{he2026hgpo} & $\underline{71.35}{\scriptstyle\pm3.9}$ & $63.80{\scriptstyle\pm1.8}$ & $75.56{\scriptstyle\pm2.6}$ & $43.35{\scriptstyle\pm2.7}$ & $\underline{25.78}{\scriptstyle\pm1.4}$ \\
\rowcolor{adastephighlight}
RL Training & \textbf{AdaStep (Ours)} & $\mathbf{73.70}{\scriptstyle\pm3.5}$ &  $\mathbf{65.79}{\scriptstyle\pm3.0}$  & $\mathbf{81.36}{\scriptstyle\pm2.5}$ & $\mathbf{65.24}{\scriptstyle\pm2.2}$ & $\mathbf{29.93}{\scriptstyle\pm1.9}$ \\
\multicolumn{2}{@{}l}{\textcolor{green!50!black}{\textit{$\Delta$ vs.\ GiGPO}}} & \textcolor{green!50!black}{$+2.60$} & \textcolor{green!50!black}{$+1.73$} & \textcolor{green!50!black}{$+0.75$} & \textcolor{green!50!black}{$+0.66$} & \textcolor{green!50!black}{$+9.3$6} \\
\midrule
\rowcolor{benchmarkgroup}
\multicolumn{7}{@{}l}{\textbf{\textit{Model: Qwen3-4B}}} \\
Prompting & Pure model & $23.96{\scriptstyle\pm 3.1}$ & $17.19{\scriptstyle\pm 2.0}$ & $25.70{\scriptstyle\pm1.7}$ & $1.95{\scriptstyle\pm0.3}$ & $14.59{\scriptstyle\pm2.5}$ \\
Prompting & ReAct~\citep{yao2023react} & $77.61{\scriptstyle\pm 1.6}$ & $80.99{\scriptstyle\pm 2.5}$ & $29.55{\scriptstyle\pm0.6}$ & $16.93{\scriptstyle\pm0.9}$ & $15.10{\scriptstyle\pm1.9}$ \\
Prompting & Reflexion~\citep{shinn2023reflexion} & $79.95{\scriptstyle\pm 4.0}$ &  $74.42{\scriptstyle\pm 1.5}$ & $35.87{\scriptstyle\pm1.3}$ & $29.04{\scriptstyle\pm1.4}$ & $37.76{\scriptstyle\pm1.1}$ \\
RL Training & PPO~\citep{schulman2017ppo} & $38.54{\scriptstyle\pm1.6}$ & $40.63{\scriptstyle\pm3.6}$ & $74.23{\scriptstyle\pm4.4}$ & $61.37{\scriptstyle\pm2.1}$ & $30.97{\scriptstyle\pm0.4}$ \\
RL Training & RLOO~\citep{ahmadian2024rloo} & $76.04{\scriptstyle\pm0.9}$ & $76.82{\scriptstyle\pm0.5}$ & $79.02{\scriptstyle\pm1.9}$ & $67.58{\scriptstyle\pm2.4}$ & $37.76{\scriptstyle\pm1.7}$ \\
RL Training & GRPO~\citep{shao2024deepseekmath} & $77.61{\scriptstyle\pm3.7}$ & $73.96{\scriptstyle\pm2.4}$ & $83.60{\scriptstyle\pm1.6}$ & $70.43{\scriptstyle\pm0.2}$ & $42.97{\scriptstyle\pm2.7}$ \\
RL Training & GiGPO~\citep{feng2025gigpo} & $88.02{\scriptstyle\pm0.9}$ & $83.07{\scriptstyle\pm4.6}$ & $87.34{\scriptstyle\pm0.6}$ & $\underline{78.28}{\scriptstyle\pm1.8}$ & $46.67{\scriptstyle\pm2.3}$ \\
RL Training & HGPO~\citep{he2026hgpo} & $\underline{91.93}{\scriptstyle\pm1.6}$ & $\underline{85.42}{\scriptstyle\pm0.5}$ & $\underline{88.18}{\scriptstyle\pm0.4}$ & $76.43{\scriptstyle\pm1.6}$ & $\underline{47.63}{\scriptstyle\pm1.3}$ \\
\rowcolor{adastephighlight}
RL Training & \textbf{AdaStep (Ours)} & $\mathbf{93.01}{\scriptstyle\pm0.9}$ & $\mathbf{86.58}{\scriptstyle\pm1.2}$ & $\mathbf{88.76}{\scriptstyle\pm0.5}$ & $\mathbf{80.27}{\scriptstyle\pm0.4}$ & $\mathbf{48.70}{\scriptstyle\pm2.7}$ \\
\multicolumn{2}{@{}l}{\textcolor{green!50!black}{\textit{$\Delta$ vs.\ GiGPO}}} & \textcolor{green!50!black}{$+4.99$} & \textcolor{green!50!black}{$+3.51$} & \textcolor{green!50!black}{$+1.42$} & \textcolor{green!50!black}{$+1.99$} & \textcolor{green!50!black}{$+2.03$} \\
\midrule
\rowcolor{benchmarkgroup}
\multicolumn{7}{@{}l}{\textbf{\textit{Model: Qwen2.5-7B-Instruct}}} \\
Prompting & Pure model & $14.80$ & - & $26.40$ & $7.80$ & $13.00$ \\
Prompting & ReAct~\citep{yao2023react} & $31.20$ & - & $46.20$ & $19.50$ & $12.50$ \\
Prompting & Reflexion~\citep{shinn2023reflexion} & $42.70$ & - & $58.10$ & $28.80$ & $23.70$ \\
RL Training & PPO~\citep{schulman2017ppo} & $43.23{\scriptstyle\pm1.8}$ & $42.19{\scriptstyle\pm4.8}$ & $81.40{\scriptstyle\pm3.1}$ & $68.7{\scriptstyle\pm5.1}$ & $24.90{\scriptstyle\pm3.6}$ \\
RL Training & RLOO~\citep{ahmadian2024rloo} & $73.70{\scriptstyle\pm1.2}$ & $67.19{\scriptstyle\pm0.8}$ & $80.3{\scriptstyle\pm3.2}$ & $65.7{\scriptstyle\pm4.0}$ & $37.70{\scriptstyle\pm1.0}$ \\
RL Training & GRPO~\citep{shao2024deepseekmath} & $80.47{\scriptstyle\pm1.6}$ & $80.99{\scriptstyle\pm0.9}$ & $79.3{\scriptstyle\pm2.8}$ & $66.1{\scriptstyle\pm3.7}$ & $42.73{\scriptstyle\pm2.5}$ \\
RL Training & GiGPO~\citep{feng2025gigpo} & $92.71{\scriptstyle\pm0.9}$ & $89.06{\scriptstyle\pm1.4}$ & $88.93{\scriptstyle\pm1.5}$ & $77.60{\scriptstyle\pm1.7}$ & $46.63{\scriptstyle\pm1.2}$ \\
RL Training & HGPO~\citep{he2026hgpo} & $\underline{94.79}{\scriptstyle\pm0.5}$ & $\underline{93.23}{\scriptstyle\pm1.2}$ & $\underline{88.96}{\scriptstyle\pm1.0}$ & $\underline{78.51}{\scriptstyle\pm1.4}$ & $\underline{46.87}{\scriptstyle\pm1.5}$ \\
\rowcolor{adastephighlight}
RL Training & \textbf{AdaStep (Ours)} & $\mathbf{97.66}{\scriptstyle\pm0.8}$ & $\mathbf{93.30}{\scriptstyle\pm1.3}$ & $\mathbf{90.46}{\scriptstyle\pm0.4}$ & $\mathbf{82.42}{\scriptstyle\pm1.2}$ & $\mathbf{47.17}{\scriptstyle\pm0.5}$ \\
\multicolumn{2}{@{}l}{\textcolor{green!50!black}{\textit{$\Delta$ vs.\ GiGPO}}} & \textcolor{green!50!black}{$+4.95$} & \textcolor{green!50!black}{$+4.24$} & \textcolor{green!50!black}{$+1.53$} & \textcolor{green!50!black}{$+4.82$} & \textcolor{green!50!black}{$+0.54$} \\
\bottomrule
\end{tabularx}
\end{table}

\begin{theorem}
\label{thm:w-computable}
Under the assumptions of Theorem~\ref{thm:w-star} and Lemma~\ref{lem:closed-form}, the optimal per-state weight~\eqref{eq:w-star} admits two equivalent forms. The first expresses the weight as the fraction of return variance due to differences in expected return across actions:
\begin{equation}
w^{*}(\boldsymbol{s}) = \frac{\operatorname{Var}_{\boldsymbol{a}\sim\pi(\cdot\mid\boldsymbol{s})}\!\bigl[Q(\boldsymbol{s},\boldsymbol{a})\bigr]}{\operatorname{Var}_{\boldsymbol{a}\sim\pi(\cdot\mid\boldsymbol{s})}\!\bigl[Q(\boldsymbol{s},\boldsymbol{a})\bigr] + \mathbb{E}_{\boldsymbol{a}\sim\pi(\cdot\mid\boldsymbol{s})}\!\bigl[\operatorname{Var}(R\mid\boldsymbol{s},\boldsymbol{a})\bigr]}\;\in[0,1].
\label{eq:w-signal}
\end{equation}
By the law of total variance, the equivalent computable form in terms of return variances is
\begin{equation}
w^{*}(\boldsymbol{s}) = 1 - \frac{\mathbb{E}_{\boldsymbol{a}\sim\pi(\cdot\mid\boldsymbol{s})}\!\bigl[\operatorname{Var}(R\mid\boldsymbol{s},\boldsymbol{a})\bigr]}{\operatorname{Var}(R\mid\boldsymbol{s})}.
\label{eq:w-computable}
\end{equation}
\end{theorem}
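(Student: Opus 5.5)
The argument is a direct substitution: Lemma~\ref{lem:closed-form} supplies both moments in Eq.~\eqref{eq:w-star}, and the law of total variance \eqref{eq:tva} rewrites the result. First, Theorem~\ref{thm:w-star} already gives $w^{*}(\boldsymbol{s})$ as the ratio $\mathbb{E}[A_S A_S^{*}\mid\boldsymbol{s}]/\mathbb{E}[A_S^{2}\mid\boldsymbol{s}]$. Its hypothesis $\mathbb{E}[A_S^2\mid\boldsymbol{s}]>0$ is, by the lemma, equivalent to $\operatorname{Var}(R\mid\boldsymbol{s})>0$ when $N_s\geqslant 2$. Inserting the closed forms, the common factor $(N_s-1)/N_s$ (nonzero since $N_s\geqslant 2$) cancels, leaving
\[
w^{*}(\boldsymbol{s})=\frac{\operatorname{Var}_{\boldsymbol{a}\sim\pi(\cdot\mid\boldsymbol{s})}[Q(\boldsymbol{s},\boldsymbol{a})]}{\operatorname{Var}(R\mid\boldsymbol{s})}.
\]

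Next, I would substitute the decomposition \eqref{eq:tva} (Lemma~\ref{lem:return-variance}) into the denominator. This gives Eq.~\eqref{eq:w-signal} directly. To obtain Eq.~\eqref{eq:w-computable}, I would write the numerator as $\operatorname{Var}(R\mid\boldsymbol{s})-\mathbb{E}_{\boldsymbol{a}}[\operatorname{Var}(R\mid\boldsymbol{s},\boldsymbol{a})]$, again by \eqref{eq:tva}, and divide through by $\operatorname{Var}(R\mid\boldsymbol{s})$. Both forms are therefore the same ratio written two ways.

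The range claim $w^{*}\in[0,1]$ follows because both summands in the denominator of \eqref{eq:w-signal} are nonnegative. One is a variance and the other an expectation of conditional variances, and the denominator is strictly positive by the hypothesis above. So the fraction has the form $x/(x+y)$ with $x,y\geqslant 0$ and $x+y>0$, which lies in $[0,1]$.

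The only point needing care is the legitimacy of the closed forms, which here are imported from Lemma~\ref{lem:closed-form}. What remains to check is that finiteness of the return variance makes every quantity in \eqref{eq:tva} finite, so the subtraction in the second form is well defined. I would verify this in one line: $\mathbb{E}_{\boldsymbol{a}}[\operatorname{Var}(R\mid\boldsymbol{s},\boldsymbol{a})]\leqslant\operatorname{Var}(R\mid\boldsymbol{s})<\infty$. Under the binary-reward setting of Section~\ref{sec:setup} this is automatic, since returns lie in $[0,1]$.
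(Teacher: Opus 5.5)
Your proposal is correct and follows the paper's proof essentially step for step: substitute the closed-form moments from Lemma~\ref{lem:closed-form} and cancel $(N_s-1)/N_s$. Then expand the denominator via \eqref{eq:tva} to get \eqref{eq:w-signal}, and rewrite the numerator as total minus expected conditional variance to get \eqref{eq:w-computable}. The paper additionally spells out uniqueness of the minimizer from $\operatorname{Var}(R\mid\boldsymbol{s})>0$ and the excluded zero-variance degenerate case, which your positivity remark already covers.
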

See Theorem~\ref{thm:appendix-w-computable} in Appendix~\ref{sec:proofs} for the proof. Equation~\eqref{eq:w-signal} expresses the weight as the fraction of return variance explained by differences in expected return across actions. We regard these differences as signal, and the remaining variability for fixed state--action pairs, averaged over actions, as noise. The weight approaches $1$ when signal dominates and $0$ when noise dominates. This group-specific weight replaces the global coefficient $\omega$ in GiGPO.

Equation~\eqref{eq:w-computable} allows us to estimate the weight from rollout return statistics without explicitly estimating action values. Section~\ref{sec:fallback} specifies the sample requirements and fallback weights for groups with insufficient data.

We compute $w^*(\boldsymbol{s})$ from the unnormalized returns and multiply the local advantage by this weight. If standard-deviation normalization is used, we first normalize the local advantage within $\mathcal{G}(\boldsymbol{s})$ and then apply the same weight.

\section{Experiments}
\label{sec:experiments}

We evaluate AdaStep using three backbone models, Qwen3-1.7B, Qwen3-4B, and
Qwen2.5-7B-Instruct, on three benchmarks: ALFWorld, WebShop, and ScienceWorld.
We compare against closed-source model baselines, prompting methods such as
ReAct, and reinforcement learning algorithms including GRPO, GiGPO, and HGPO.
Detailed benchmark descriptions, baselines, and implementation settings are
provided in Appendix~\ref{sec:experimental-details}.

\subsection{Main results}
\label{sec:main-results}

Table~\ref{tab:main} compares AdaStep with prompting-based and RL baselines across three models and three benchmarks. AdaStep consistently outperforms HGPO and GiGPO across all model--benchmark combinations, with more pronounced gains over GRPO and PPO. These results support the effectiveness of reliability-aware step-level credit assignment across different model scales and task environments.

\subsection{Robustness to advantage normalization}
\label{sec:normalization}

\begin{wraptable}{r}{0.55\textwidth}
\vspace{-\intextsep}
\setlength{\abovecaptionskip}{0pt}
\centering
\footnotesize
\caption{Results with and without std normalization. $\Delta$ denotes gains over GiGPO and best results \textbf{bolded}.}
\label{tab:normalization}
\scriptsize
\setlength{\tabcolsep}{2pt}
\renewcommand{\arraystretch}{1.08}
\begin{tabular}{@{}llcccc@{}}
\toprule
\multirow{2}{*}{\textbf{Std}} & \multirow{2}{*}{\textbf{Method}}
& \multicolumn{2}{c}{\textbf{ALFWorld}}
& \multicolumn{2}{c}{\textbf{WebShop}} \\
\cmidrule(lr){3-4}\cmidrule(lr){5-6}
& & In-Suc. & Out-Suc. & Score & Suc. \\
\midrule
\rowcolor{benchmarkgroup}
\multicolumn{6}{@{}l}{\textbf{\textit{Model: Qwen3-4B}}} \\
w/ & GiGPO & $88.02{\scriptstyle\pm0.9}$ & $83.07{\scriptstyle\pm4.6}$ & $88.77{\scriptstyle\pm0.3}$ & $77.17{\scriptstyle\pm0.3}$ \\
\rowcolor{adastephighlight}
w/ & \textbf{AdaStep} & $\mathbf{93.01}{\scriptstyle\pm0.9}$ & $\mathbf{86.58}{\scriptstyle\pm1.2}$ & $\mathbf{89.90}{\scriptstyle\pm1.1}$ & $\mathbf{79.27}{\scriptstyle\pm0.5}$ \\
\multicolumn{2}{@{}l}{\textcolor{green!50!black}{\textit{$\Delta$}}} & \textcolor{green!50!black}{$+4.99$} & \textcolor{green!50!black}{$+3.51$} & \textcolor{green!50!black}{$+1.13$} & \textcolor{green!50!black}{$+2.10$} \\
\addlinespace[2pt]
w/o & GiGPO & $88.20{\scriptstyle\pm1.2}$ & $85.17{\scriptstyle\pm0.7}$ & $87.34{\scriptstyle\pm0.6}$ & $78.28{\scriptstyle\pm1.8}$ \\
\rowcolor{adastephighlight}
w/o & \textbf{AdaStep} & $\mathbf{91.73}{\scriptstyle\pm1.8}$ & $\mathbf{85.40}{\scriptstyle\pm0.8}$ & $\mathbf{88.76}{\scriptstyle\pm0.5}$ & $\mathbf{80.27}{\scriptstyle\pm0.4}$ \\
\multicolumn{2}{@{}l}{\textcolor{green!50!black}{\textit{$\Delta$}}} & \textcolor{green!50!black}{$+3.53$} & \textcolor{green!50!black}{$+0.23$} & \textcolor{green!50!black}{$+1.42$} & \textcolor{green!50!black}{$+1.99$} \\
\midrule
\rowcolor{benchmarkgroup}
\multicolumn{6}{@{}l}{\textbf{\textit{Model: Qwen2.5-7B-Instruct}}} \\
w/ & GiGPO & $92.71{\scriptstyle\pm0.9}$ & $89.06{\scriptstyle\pm1.4}$ & $88.70{\scriptstyle\pm2.9}$ & $76.53{\scriptstyle\pm 2.7}$ \\
\rowcolor{adastephighlight}
w/ & \textbf{AdaStep} & $\mathbf{97.66}{\scriptstyle\pm0.8}$ & $\mathbf{93.30}{\scriptstyle\pm1.3}$ & $\mathbf{89.93}{\scriptstyle\pm0.1}$ & $\mathbf{79.97}{\scriptstyle\pm0.6}$ \\
\multicolumn{2}{@{}l}{\textcolor{green!50!black}{\textit{$\Delta$}}} & \textcolor{green!50!black}{$+4.95$} & \textcolor{green!50!black}{$+4.24$} & \textcolor{green!50!black}{$+1.23$} & \textcolor{green!50!black}{$+3.44$} \\
\addlinespace[2pt]
w/o & GiGPO & $92.93{\scriptstyle\pm0.3}$ & $88.00{\scriptstyle\pm1.6}$ & $88.93{\scriptstyle\pm1.5}$ & $77.60{\scriptstyle\pm1.7}$ \\
\rowcolor{adastephighlight}
w/o & \textbf{AdaStep} & $\mathbf{96.00}{\scriptstyle\pm0.1}$ & $\mathbf{90.63}{\scriptstyle\pm1.0}$ & $\mathbf{90.46}{\scriptstyle\pm0.4}$ & $\mathbf{82.42}{\scriptstyle\pm1.2}$ \\
\multicolumn{2}{@{}l}{\textcolor{green!50!black}{\textit{$\Delta$}}} & \textcolor{green!50!black}{$+3.07$} & \textcolor{green!50!black}{$+2.63$} & \textcolor{green!50!black}{$+1.53$} & \textcolor{green!50!black}{$+4.82$} \\
\bottomrule
\end{tabular}
\vspace{-3em}
\end{wraptable}

To examine whether AdaStep's gains depend on a particular normalization setting,
Table~\ref{tab:normalization} compares GiGPO and AdaStep both with and without
standard-deviation normalization on Qwen3-4B and Qwen2.5-7B-Instruct. We add reruns of ALFWorld without std
normalization and WebShop with std normalization. In both settings, the
coefficient is computed from unnormalized group statistics before any
normalization of the step-level advantages.

AdaStep consistently achieves higher mean performance than GiGPO both
with and without standard-deviation normalization
(Table~\ref{tab:normalization}), supporting its effectiveness under
both settings.

\subsection{Fallback weights for non-estimable groups}
\label{sec:fallback}
In practice, some step-level groups contain only one distinct action,
only one return observation per action, or identical returns, as
illustrated in Table~\ref{tab:fallback}(a). We therefore estimate the
coefficient in Theorem~\ref{thm:w-computable} only for groups containing
at least two distinct actions, at least two return observations per
action, and positive total return variance.

Groups with identical returns have zero step-level advantages and require
no correction. For other groups that do not meet the estimation criteria,
we compare three default weights: $w=0$, $w=\bar{w}$, and $w=1$, where
$\bar{w}$ is the mean coefficient over eligible groups in the current batch
when such groups exist. Eligible groups use their estimated coefficients
in all three variants.

On WebShop and ALFWorld with Qwen2.5-7B-Instruct, $w=1$ achieves the highest
observed success rate among the three choices
(Table~\ref{tab:fallback}(b)), supporting retention of the original
step-level signal when evidence is insufficient for coefficient estimation.

\begin{table}[t]
\vspace{-1em}
\centering
\caption{
Left: groups outside the estimation criteria; right: Qwen2.5-7B-Instruct
success rates (\%) under different default weights, with the best results in \textbf{bold}.}
\label{tab:fallback}
\footnotesize
\setlength{\tabcolsep}{3pt}
\renewcommand{\arraystretch}{1.15}
\begin{minipage}[t]{0.58\textwidth}
\vspace{0pt}
\centering
\textbf{(a) Group configurations}\par\smallskip
\begin{tabularx}{\linewidth}{@{}>{\raggedright\arraybackslash}X>{\raggedright\arraybackslash}X@{}}
\toprule
\textbf{Observed group} & \textbf{Handling} \\
\midrule
One action; identical returns & $A_S=0$; no correction. \\
\addlinespace[2pt]
One action; varying returns & Cannot compute $\mathbb{E}_{\boldsymbol{a}}[\cdot]$. \\
\addlinespace[2pt]
Multiple actions; one return observation per action & Cannot compute $\operatorname{Var}(R\mid\boldsymbol{s},\boldsymbol{a})$. \\
\bottomrule
\end{tabularx}
\end{minipage}\hfill
\begin{minipage}[t]{0.39\textwidth}
\vspace{0pt}
\centering
\textbf{(b) Fallback ablation}\par\smallskip
\begin{tabular*}{\linewidth}{@{\extracolsep{\fill}}lcc@{}}
\toprule
\textbf{Fallback} & \textbf{WebShop} & \textbf{ALFWorld} \\
\midrule
$w=0$ & $80.1$ & $92.2$ \\
$w=\bar{w}$ (mean) & $77.0$ & $96.1$ \\
\rowcolor{adastephighlight}
$w=1$ & $\mathbf{82.4}$ & $\mathbf{97.6}$ \\
\bottomrule
\end{tabular*}
\end{minipage}
\end{table}

\subsection{Sensitivity to the step-level coefficient}
\label{sec:weight-sensitivity}

\begin{figure}[t]
\centering
\includegraphics[width=\linewidth]{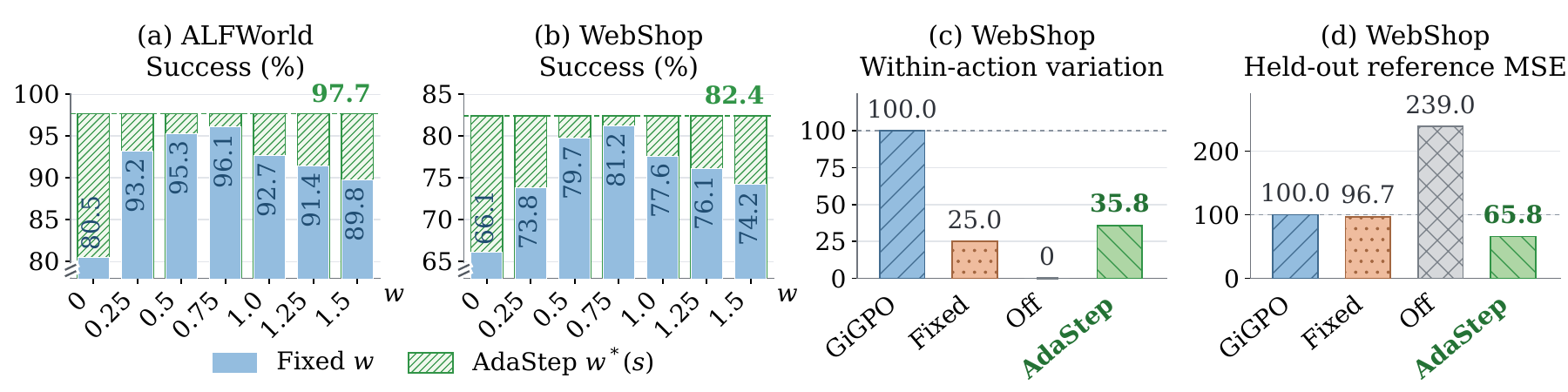}
\caption{\textbf{Coefficient sensitivity and step-credit diagnostics.}
(a,b) Success rates for fixed weights (blue) and AdaStep (hatched green).
(c,d) WebShop within-action variation and held-out reference MSE, each
normalized separately to GiGPO $=100$; Fixed uses $c=0.5$ and Off uses $c=0$.
Appendix~\ref{sec:credit-diagnostic-details} gives the diagnostic protocol.}
\label{fig:weight-sensitivity}
\label{fig:webshop-credit-diagnostics}
\end{figure}
Figure~\ref{fig:weight-sensitivity}(a,b) compares fixed global weights in
$A=A_E+wA_S$ with AdaStep using Qwen2.5-7B-Instruct on ALFWorld and WebShop.
On both benchmarks, moderate local weighting improves performance,
whereas excessive weighting degrades it. AdaStep achieves higher success
rates than every tested fixed coefficient, supporting group-specific
adaptation beyond global weight tuning.

\subsection{Analysis of step-credit estimation on WebShop}
\label{sec:credit-diagnostics}

We examine step-credit estimation on 3,666 historical WebShop anchor-state
groups using two complementary diagnostics. Within-action variation measures
the remaining fluctuation among weighted credits assigned to the same action.
Held-out reference MSE measures their squared deviation from action-level
advantages estimated using a separate subset of samples. Appendix~\ref{sec:credit-diagnostic-details}
describes the coefficient estimation, sample splitting, and metric definitions.

Figure~\ref{fig:webshop-credit-diagnostics}(c,d) shows that AdaStep reduces
within-action variation while achieving the lowest held-out reference MSE.
This suggests a better balance between suppressing fluctuations and
preserving action-level information, as reducing variation alone does
not guarantee better estimation.

\subsection{Computational efficiency}
\label{sec:efficiency}

\begin{wraptable}{r}{0.4\textwidth}
\vspace{-\intextsep}
\setlength{\abovecaptionskip}{0pt}
\centering
\caption{Credit-computation time per iteration and success rates on WebShop.}
\label{tab:efficiency}
\small
\setlength{\tabcolsep}{3pt}
\renewcommand{\arraystretch}{1.12}
\begin{tabularx}{\linewidth}{@{}Xcc@{}}
\toprule
\textbf{Method} & \textbf{Time (s)} & \textbf{Success (\%)} \\
\midrule
GiGPO & $0.4968{\scriptstyle\pm0.0324}$ & $77.60{\scriptstyle\pm1.7}$ \\
HGPO & $0.6340{\scriptstyle\pm0.0711}$ & $78.51{\scriptstyle\pm1.4}$ \\
\rowcolor{adastephighlight}
\textbf{AdaStep} & $0.5018{\scriptstyle\pm0.0114}$ & $\mathbf{82.42}{\scriptstyle\pm1.2}$ \\
\bottomrule
\end{tabularx}
\vspace{-2em}
\end{wraptable}

Table~\ref{tab:efficiency} compares per-iteration return and advantage
computation on WebShop with Qwen2.5-7B-Instruct. AdaStep achieves the highest
mean success rate with approximately $1.0\%$ more credit-computation time
than GiGPO, while also requiring less time than HGPO. These results show
that adaptive step-credit weighting improves performance with minimal
additional computation.

\section{Conclusion}
\label{sec:conclusion}

We introduced AdaStep, which derives an optimal per-state shrinkage coefficient from a mean-squared-error objective on the latent step advantage and reads it as the fraction of return variance explained by the selected action.
Across three backbones and three benchmarks, AdaStep consistently improves over GRPO, GiGPO, and HGPO, with gains that persist with and without advantage normalization, at negligible computational cost.
Extending AdaStep to continuous or partially observable environments, where anchor states cannot be matched directly, remains open and is discussed in Appendix~\ref{sec:limitations}.

\bibliography{references}
\bibliographystyle{iclr2027_conference}

\clearpage
\appendix

\section{Proofs of Main Results}
\label{sec:proofs}
\setcounter{theorem}{0}
\renewcommand{\thetheorem}{\thesection.\arabic{theorem}}
\setcounter{lemma}{0}
\renewcommand{\thelemma}{\thesection.\arabic{lemma}}
\renewcommand{\theHtheorem}{\thetheorem}
\renewcommand{\theHlemma}{\thelemma}

We collect here the proofs of the results stated in the main text. Throughout, we fix an anchor state $\boldsymbol{s}$ and write $N_s = |\mathcal{G}(\boldsymbol{s})| \geqslant 2$ for the size of its step-level group. We assume the action--return samples are i.i.d.\ conditional on $\boldsymbol{s}$,
\[
(\boldsymbol{a}^{(j)}, R^{(j)}) \stackrel{\text{i.i.d.}}{\sim} \pi(\boldsymbol{a}\mid\boldsymbol{s})\,p(R\mid\boldsymbol{s},\boldsymbol{a}),
\]
and that $\operatorname{Var}[R\mid\boldsymbol{s}] < \infty$. The state and action values are
\[
V(\boldsymbol{s}) = \mathbb{E}_{\pi}[R\mid\boldsymbol{s}], \qquad Q(\boldsymbol{s},\boldsymbol{a}) = \mathbb{E}_{\pi}[R\mid\boldsymbol{s},\boldsymbol{a}], \qquad A_S^{*}(\boldsymbol{s},\boldsymbol{a}) = Q(\boldsymbol{s},\boldsymbol{a}) - V(\boldsymbol{s}),
\]
and the return variance decomposes by the law of total variance as
\[
\operatorname{Var}(R\mid\boldsymbol{s}) = \operatorname{Var}_{\boldsymbol{a}}[Q(\boldsymbol{s},\boldsymbol{a})\mid\boldsymbol{s}] + \mathbb{E}_{\boldsymbol{a}}[\operatorname{Var}(R\mid\boldsymbol{s},\boldsymbol{a})\mid\boldsymbol{s}].
\]
Two identities follow immediately from $V(\boldsymbol{s}) = \mathbb{E}_{\boldsymbol{a}}[Q(\boldsymbol{s},\boldsymbol{a})\mid\boldsymbol{s}]$: $\mathbb{E}_{\boldsymbol{a}}[A_S^{*}(\boldsymbol{s},\boldsymbol{a})\mid\boldsymbol{s}] = 0$ and $\mathbb{E}_{\boldsymbol{a}}[A_S^{*}(\boldsymbol{s},\boldsymbol{a})^{2}\mid\boldsymbol{s}] = \operatorname{Var}_{\boldsymbol{a}}[Q(\boldsymbol{s},\boldsymbol{a})\mid\boldsymbol{s}]$. The step-level estimate and its target are
\[
A_S(\boldsymbol{s},\boldsymbol{a}^{(i)}) = R^{(i)} - \hat{V}_{N_s}, \qquad \hat{V}_{N_s} = \frac{1}{N_s}\sum_{j=1}^{N_s} R^{(j)}, \qquad A_S^{*}(\boldsymbol{s},\boldsymbol{a}^{(i)}) = Q(\boldsymbol{s},\boldsymbol{a}^{(i)}) - V(\boldsymbol{s}),
\]
where the group mean $\hat{V}_{N_s}$ includes sample $i$ itself.

\begin{theorem}[Theorem~\ref{thm:w-star} in the main text]
\label{thm:appendix-w-star}
Fix an anchor state $\boldsymbol{s}$ and a step-level group $\mathcal{G}(\boldsymbol{s})$ of size $N_s \geqslant 2$ with the noise model in Eq.~\eqref{eq:noisy-estimate}, with a strictly positive second moment of the estimate, $\mathbb{E}[A_S(\boldsymbol{s},\boldsymbol{a})^{2}\mid\boldsymbol{s}] > 0$. Then the minimizer of Eq.~\eqref{eq:ms-objective} is
\begin{equation}
 w^{*}(\boldsymbol{s}) = \frac{\mathbb{E}\!\left[A_S(\boldsymbol{s},\boldsymbol{a})\,A_S^{*}(\boldsymbol{s},\boldsymbol{a})\,\middle|\,\boldsymbol{s}\right]}{\mathbb{E}\!\left[A_S(\boldsymbol{s},\boldsymbol{a})^{2}\,\middle|\,\boldsymbol{s}\right]}.
\end{equation}
\end{theorem}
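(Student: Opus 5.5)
The plan is to treat Eq.~\eqref{eq:ms-objective} as a one-dimensional least-squares problem in the scalar $w$ and minimize it directly. Write $m_2 = \mathbb{E}[A_S(\boldsymbol{s},\boldsymbol{a})^2\mid\boldsymbol{s}]$, $c = \mathbb{E}[A_S(\boldsymbol{s},\boldsymbol{a})A_S^{*}(\boldsymbol{s},\boldsymbol{a})\mid\boldsymbol{s}]$ and $m_2^{*} = \mathbb{E}[A_S^{*}(\boldsymbol{s},\boldsymbol{a})^2\mid\boldsymbol{s}]$. Expanding the square and using linearity of conditional expectation, the objective becomes the quadratic
\[
J(w) = w^{2} m_2 - 2 w c + m_2^{*}.
\]

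Before expanding, I will check that all three moments are finite, so that the expansion is legitimate and $J$ is a genuine real-valued quadratic. Under the standing assumption $\operatorname{Var}(R\mid\boldsymbol{s})<\infty$, the returns have finite second moment. Hence $A_S = R^{(i)}-\hat V_{N_s}$, a finite linear combination of returns, is square-integrable, and $A_S^{*}$ is square-integrable with $m_2^{*}=\operatorname{Var}_{\boldsymbol{a}}[Q]\le \operatorname{Var}(R\mid\boldsymbol{s})$. Cauchy--Schwarz then gives $|c|\le \sqrt{m_2 m_2^{*}}<\infty$. The condition $N_s\ge 2$ enters only to make the group-mean construction meaningful; the noise model Eq.~\eqref{eq:noisy-estimate} is simply the definition $\eta_i = A_S - A_S^{*}$ and needs no further use here.

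Since $m_2>0$ by hypothesis, $J$ is strictly convex in $w$. Setting $J'(w)=2wm_2-2c=0$ gives the unique stationary point $w^{*}=c/m_2$, which is the global minimizer. Equivalently, completing the square gives $J(w)=m_2(w-c/m_2)^2 + m_2^{*}-c^2/m_2$, which shows both minimality and uniqueness at once, and yields the residual error as a by-product.

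The only step that needs any care is the integrability and interchange of expectation, i.e.\ ensuring that the cross moment $c$ is well defined; Cauchy--Schwarz disposes of it. I expect no real obstacle here. The computationally substantive work, evaluating $c$ and $m_2$, is deferred to Lemma~\ref{lem:closed-form} and Theorem~\ref{thm:w-computable}.
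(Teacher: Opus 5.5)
Your proposal is correct and follows essentially the same route as the paper. Both arguments expand the mean-square objective into the quadratic $w^{2}\mathbb{E}[A_S^{2}\mid\boldsymbol{s}]-2w\,\mathbb{E}[A_S A_S^{*}\mid\boldsymbol{s}]+\mathbb{E}[A_S^{*2}\mid\boldsymbol{s}]$, use $\mathbb{E}[A_S^{2}\mid\boldsymbol{s}]>0$ for strict convexity, and read off the stationary point; the paper likewise notes that no unbiasedness is needed. Your integrability check (via the standing finite-variance assumption and Cauchy--Schwarz) and your completed-square form are small rigor additions; the paper handles finiteness only later, in the proof of Theorem~\ref{thm:appendix-w-computable}.
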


\begin{proof}
We minimize $\mathcal{L}(w) = \mathbb{E}\!\left[(w\,A_S - A_S^{*})^{2}\mid\boldsymbol{s}\right]$ over $w\in\mathbb{R}$, the expectation taken over $i$ and $\boldsymbol{a}^{(i)}\sim\pi(\cdot\mid\boldsymbol{s})$. Expanding the square,
\[
\mathcal{L}(w) = w^{2}\,\mathbb{E}[A_S^{2}\mid\boldsymbol{s}] - 2w\,\mathbb{E}[A_S\,A_S^{*}\mid\boldsymbol{s}] + \mathbb{E}[A_S^{*2}\mid\boldsymbol{s}].
\]
This is a convex quadratic in $w$ (its curvature $2\,\mathbb{E}[A_S^{2}\mid\boldsymbol{s}]$ is strictly positive by assumption), so its minimizer is the stationary point. Differentiating and setting to zero,
\[
\frac{\partial \mathcal{L}}{\partial w} = 2w\,\mathbb{E}[A_S^{2}\mid\boldsymbol{s}] - 2\,\mathbb{E}[A_S\,A_S^{*}\mid\boldsymbol{s}] = 0
\quad\Longrightarrow\quad
w^{*}(\boldsymbol{s}) = \frac{\mathbb{E}[A_S\,A_S^{*}\mid\boldsymbol{s}]}{\mathbb{E}[A_S^{2}\mid\boldsymbol{s}]}.
\]
This is the orthogonal projection of the truth $A_S^{*}$ onto the one-dimensional span of the estimate $A_S$; it requires only $\mathbb{E}[A_S^{2}\mid\boldsymbol{s}]>0$, and no unbiasedness assumption is needed for this projection to hold. The role of unbiasedness, established next, is to make the closed form of the numerator computable.
\end{proof}

\begin{lemma}
\label{lem:eta-decomp}
Fix an anchor state $\boldsymbol{s}$ and a step-level group $\mathcal{G}(\boldsymbol{s})$ of size $N_s\geqslant 2$, with action--return samples $(\boldsymbol{a}^{(j)},R^{(j)})$ i.i.d.\ conditional on $\boldsymbol{s}$, and define $\hat{V}_{N_s}=\tfrac{1}{N_s}\sum_{j=1}^{N_s}R^{(j)}$ (the group mean, which includes sample $i$ itself), $A_S(\boldsymbol{s},\boldsymbol{a}^{(i)})=R^{(i)}-\hat{V}_{N_s}$, $A_S^{*}(\boldsymbol{s},\boldsymbol{a}^{(i)})=Q(\boldsymbol{s},\boldsymbol{a}^{(i)})-V(\boldsymbol{s})$, and $\eta_i=A_S(\boldsymbol{s},\boldsymbol{a}^{(i)})-A_S^{*}(\boldsymbol{s},\boldsymbol{a}^{(i)})$. Then $\eta_i$ admits the three-term decomposition
\begin{equation}
\eta_i = \frac{N_s-1}{N_s}\bigl(R^{(i)} - Q(\boldsymbol{s},\boldsymbol{a}^{(i)})\bigr) \;-\; \frac{A_S^{*}(\boldsymbol{s},\boldsymbol{a}^{(i)})}{N_s} \;-\; \frac{1}{N_s}\sum_{j\neq i}\bigl(R^{(j)} - V(\boldsymbol{s})\bigr),
\end{equation}
i.e.\ Eq.~\eqref{eq:eta-decomp} of the main text, where term~left is the sampling noise of the current rollout, term~middle is a $1/N_s$ shrinkage of the true step advantage, and term~right is the average deviation of the other group members from the state value. Its conditional and marginal means are
\begin{align}
\mathbb{E}_{i}[\eta_i\mid\boldsymbol{s},\boldsymbol{a}^{(i)}] &= -\frac{A_S^{*}(\boldsymbol{s},\boldsymbol{a}^{(i)})}{N_s}, \label{eq:cond-bias} \\
\mathbb{E}_{i,\boldsymbol{a}\sim\pi(\cdot\mid\boldsymbol{s})}[\eta_i\mid\boldsymbol{s}] &= 0. \label{eq:unbiased}
\end{align}
\end{lemma}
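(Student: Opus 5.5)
The plan is to verify the three-term decomposition by direct algebra, and then obtain both means by linearity of conditional expectation together with the conditional i.i.d.\ assumption.

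\textbf{Decomposition.} Start from $\eta_i = R^{(i)} - \hat{V}_{N_s} - Q(\boldsymbol{s},\boldsymbol{a}^{(i)}) + V(\boldsymbol{s})$. Split the group mean into its own-sample part and the rest:
\[
\hat{V}_{N_s} = \tfrac{1}{N_s}R^{(i)} + \tfrac{1}{N_s}\sum_{j\neq i}R^{(j)}.
\]
This gives $\eta_i = \tfrac{N_s-1}{N_s}R^{(i)} - \tfrac{1}{N_s}\sum_{j\neq i}R^{(j)} - Q + V$. Now write $V(\boldsymbol{s}) = \tfrac{N_s-1}{N_s}V + \tfrac{1}{N_s}V$ and $Q = \tfrac{N_s-1}{N_s}Q + \tfrac{1}{N_s}Q$. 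Next, add and subtract $\tfrac{N_s-1}{N_s}V$, rewritten as $\tfrac{1}{N_s}\sum_{j\neq i}V$. Regrouping yields three pieces: $\tfrac{N_s-1}{N_s}(R^{(i)}-Q)$, then $-\tfrac{1}{N_s}(Q-V) = -A_S^{*}/N_s$, then $-\tfrac{1}{N_s}\sum_{j\neq i}(R^{(j)}-V)$. This is a bookkeeping step; the only care needed is that the $N_s-1$ copies of $V$ match the $N_s-1$ indices $j\neq i$.

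\textbf{Conditional mean.} Condition on $\boldsymbol{s}$ and $\boldsymbol{a}^{(i)}$.
\begin{itemize}
\item The first term has mean zero, since $Q(\boldsymbol{s},\boldsymbol{a}) = \mathbb{E}[R\mid\boldsymbol{s},\boldsymbol{a}]$.
\item The middle term is deterministic given $\boldsymbol{a}^{(i)}$.
\item For the last term, conditional independence means conditioning on $\boldsymbol{a}^{(i)}$ does not change the law of $(\boldsymbol{a}^{(j)},R^{(j)})$ for $j\neq i$. Hence $\mathbb{E}[R^{(j)}\mid\boldsymbol{s},\boldsymbol{a}^{(i)}] = \mathbb{E}[R^{(j)}\mid\boldsymbol{s}] = V(\boldsymbol{s})$, and each summand vanishes.
\end{itemize}
Together these give \eqref{eq:cond-bias}. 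Finite return variance guarantees all expectations exist.

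\textbf{Marginal mean.} Apply the tower property over $\boldsymbol{a}^{(i)}\sim\pi(\cdot\mid\boldsymbol{s})$ to \eqref{eq:cond-bias}. Using the identity $\mathbb{E}_{\boldsymbol{a}}[A_S^{*}\mid\boldsymbol{s}]=0$ recorded above gives \eqref{eq:unbiased}. Averaging over the index $i$ changes nothing, because every $i$ has the same conditional mean.

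The main obstacle is justifying the independence step in the last term, namely that conditioning on sample $i$'s action leaves the others' expectations equal to $V(\boldsymbol{s})$. The statement assumes exactly this conditional i.i.d.\ structure, so I will invoke it explicitly rather than reason about trajectory dependencies.
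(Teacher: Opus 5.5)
Your proposal is correct and follows essentially the same route as the paper: you split $\hat{V}_{N_s}$ into its own-sample and other-sample parts, regroup to obtain the three terms, remove the left term via $Q(\boldsymbol{s},\boldsymbol{a})=\mathbb{E}[R\mid\boldsymbol{s},\boldsymbol{a}]$ and the right term via conditional independence, and then marginalize using $\mathbb{E}_{\boldsymbol{a}}[A_S^{*}\mid\boldsymbol{s}]=0$. One small cosmetic point: once you have split both $Q$ and $V$ into their $\tfrac{N_s-1}{N_s}$ and $\tfrac{1}{N_s}$ parts, the piece $\tfrac{N_s-1}{N_s}V=\tfrac{1}{N_s}\sum_{j\neq i}V(\boldsymbol{s})$ is already present, so the extra add-and-subtract step is unnecessary.
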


\begin{proof}
From $A_S(\boldsymbol{s},\boldsymbol{a}^{(i)}) = R^{(i)} - \hat{V}_{N_s}$ and $A_S^{*}(\boldsymbol{s},\boldsymbol{a}^{(i)}) = Q(\boldsymbol{s},\boldsymbol{a}^{(i)}) - V(\boldsymbol{s})$,
\[
\eta_i = A_S(\boldsymbol{s},\boldsymbol{a}^{(i)}) - A_S^{*}(\boldsymbol{s},\boldsymbol{a}^{(i)}) = \bigl(R^{(i)} - Q(\boldsymbol{s},\boldsymbol{a}^{(i)})\bigr) - \bigl(\hat{V}_{N_s} - V(\boldsymbol{s})\bigr).
\]
Split the group-mean residual into the $i$-th and the remaining terms,
\[
\hat{V}_{N_s} - V(\boldsymbol{s}) = \frac{1}{N_s}\sum_{j=1}^{N_s}(R^{(j)} - V(\boldsymbol{s})) = \frac{1}{N_s}(R^{(i)} - V(\boldsymbol{s})) + \frac{1}{N_s}\sum_{j\neq i}(R^{(j)} - V(\boldsymbol{s})),
\]
so that
\[
\eta_i = \bigl(R^{(i)} - Q(\boldsymbol{s},\boldsymbol{a}^{(i)})\bigr) - \frac{1}{N_s}(R^{(i)} - V(\boldsymbol{s})) - \frac{1}{N_s}\sum_{j\neq i}(R^{(j)} - V(\boldsymbol{s})).
\]
Grouping the first two terms,
\begin{equation*}
\begin{aligned}
R^{(i)} - Q(\boldsymbol{s},\boldsymbol{a}^{(i)}) - \frac{1}{N_s}(R^{(i)} - V(\boldsymbol{s}))
= \frac{N_s-1}{N_s}\bigl(R^{(i)} - Q(\boldsymbol{s},\boldsymbol{a}^{(i)})\bigr) + \frac{1}{N_s}\bigl(V(\boldsymbol{s}) - Q(\boldsymbol{s},\boldsymbol{a}^{(i)})\bigr) \\
= \frac{N_s-1}{N_s}\bigl(R^{(i)} - Q(\boldsymbol{s},\boldsymbol{a}^{(i)})\bigr) - \frac{1}{N_s}A_S^{*}(\boldsymbol{s},\boldsymbol{a}^{(i)}),
\end{aligned}
\end{equation*}
which yields the three-term decomposition~\eqref{eq:eta-decomp}:
\[
\eta_i = \frac{N_s-1}{N_s}\bigl(R^{(i)} - Q(\boldsymbol{s},\boldsymbol{a}^{(i)})\bigr) - \frac{A_S^{*}(\boldsymbol{s},\boldsymbol{a}^{(i)})}{N_s} - \frac{1}{N_s}\sum_{j\neq i}\bigl(R^{(j)} - V(\boldsymbol{s})\bigr).
\]

We now take conditional expectations. Given $(\boldsymbol{s},\boldsymbol{a}^{(i)})$:
\begin{itemize}
\item Term~left has mean $\frac{N_s-1}{N_s}\bigl(\mathbb{E}[R^{(i)}\mid\boldsymbol{s},\boldsymbol{a}^{(i)}] - Q(\boldsymbol{s},\boldsymbol{a}^{(i)})\bigr) = \frac{N_s-1}{N_s}\bigl(Q(\boldsymbol{s},\boldsymbol{a}^{(i)}) - Q(\boldsymbol{s},\boldsymbol{a}^{(i)})\bigr) = 0$.
\item For term~right, each $R^{(j)}$ with $j\neq i$ is independent of $\boldsymbol{a}^{(i)}$ and has conditional mean $\mathbb{E}[R^{(j)}\mid\boldsymbol{s},\boldsymbol{a}^{(i)}] = \mathbb{E}[R^{(j)}\mid\boldsymbol{s}] = V(\boldsymbol{s})$, so $\mathbb{E}[R^{(j)} - V(\boldsymbol{s})\mid\boldsymbol{s},\boldsymbol{a}^{(i)}] = 0$, and term~(iii) vanishes.
\end{itemize}
Only term~middle survives, giving the conditional bias~\eqref{eq:cond-bias}:
\[
\mathbb{E}_{i}[\eta_i\mid\boldsymbol{s},\boldsymbol{a}^{(i)}] = -\frac{A_S^{*}(\boldsymbol{s},\boldsymbol{a}^{(i)})}{N_s}.
\]
Marginalizing once more over $\boldsymbol{a}^{(i)}\sim\pi(\cdot\mid\boldsymbol{s})$ and using $\mathbb{E}_{\boldsymbol{a}}[A_S^{*}(\boldsymbol{s},\boldsymbol{a})\mid\boldsymbol{s}] = 0$,
\[
\mathbb{E}_{i,\boldsymbol{a}}[\eta_i\mid\boldsymbol{s}] = -\frac{1}{N_s}\,\mathbb{E}_{\boldsymbol{a}}[A_S^{*}(\boldsymbol{s},\boldsymbol{a})\mid\boldsymbol{s}] = 0,
\]
which is the state-level unbiasedness~\eqref{eq:unbiased}. In words: the estimate is biased by a $1/N_s$ factor at a fixed action, but unbiased once averaged over the action distribution at $\boldsymbol{s}$.
\end{proof}

\begin{lemma}[Lemma~\ref{lem:closed-form} in the main text]
\label{lem:appendix-closed-form}
Assume the action--return pairs $(\boldsymbol{a}^{(j)},R^{(j)})$ are i.i.d.\ conditional on $\boldsymbol{s}$ with finite return variance, in addition to the assumptions of Theorem~\ref{thm:w-star}. For the step-level estimate and its target defined in~\eqref{eq:step-adv} and~\eqref{eq:true-adv}, the numerator and denominator of~\eqref{eq:w-star} satisfy
\begin{align*}
\mathbb{E}[A_S\,A_S^{*}\mid\boldsymbol{s}] &= \frac{N_s-1}{N_s}\,\operatorname{Var}_{\boldsymbol{a}\sim\pi(\cdot\mid\boldsymbol{s})}\!\bigl[Q(\boldsymbol{s},\boldsymbol{a})\bigr], \\
\mathbb{E}[A_S^{2}\mid\boldsymbol{s}] &= \frac{N_s-1}{N_s}\,\operatorname{Var}(R\mid\boldsymbol{s}).
\end{align*}
\end{lemma}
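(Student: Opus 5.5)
The plan is to write the unnormalized step advantage as a linear combination of independent, mean-zero centered returns, and then compute both moments by expanding and discarding cross terms. Set $D_j = R^{(j)} - V(\boldsymbol{s})$ for each $j\in\mathcal{G}(\boldsymbol{s})$. Under the i.i.d.\ assumption, the $D_j$ are independent given $\boldsymbol{s}$, each has mean $\mathbb{E}[D_j\mid\boldsymbol{s}]=0$, and each has variance $\sigma^2:=\operatorname{Var}(R\mid\boldsymbol{s})<\infty$. Directly from Eq.~\eqref{eq:step-adv} we get
\[
A_S(\boldsymbol{s},\boldsymbol{a}^{(i)}) = R^{(i)}-\hat V_{N_s} = \frac{N_s-1}{N_s}D_i - \frac{1}{N_s}\sum_{j\neq i} D_j .
\]
This is the same splitting of the group mean that was used in the proof of Lemma~\ref{lem:eta-decomp}. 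The target $A_S^{*}(\boldsymbol{s},\boldsymbol{a}^{(i)}) = Q(\boldsymbol{s},\boldsymbol{a}^{(i)})-V(\boldsymbol{s})$ is a function of $\boldsymbol{a}^{(i)}$ alone.

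\textbf{Numerator.} Multiply the representation above by $A_S^{*}(\boldsymbol{s},\boldsymbol{a}^{(i)})$ and take expectations term by term.

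For the $i$-th term, condition on $(\boldsymbol{s},\boldsymbol{a}^{(i)})$ and use the tower property with $\mathbb{E}[R^{(i)}\mid\boldsymbol{s},\boldsymbol{a}^{(i)}]=Q(\boldsymbol{s},\boldsymbol{a}^{(i)})$. This gives
\[
\mathbb{E}[D_i A_S^{*}\mid\boldsymbol{s}] = \mathbb{E}_{\boldsymbol{a}}[A_S^{*2}\mid\boldsymbol{s}] = \operatorname{Var}_{\boldsymbol{a}}[Q(\boldsymbol{s},\boldsymbol{a})],
\]
where the last equality is the identity recorded at the start of this appendix.

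For each $j\neq i$, $D_j$ is independent of $\boldsymbol{a}^{(i)}$ and has mean zero. Hence $\mathbb{E}[D_jA_S^{*}(\boldsymbol{s},\boldsymbol{a}^{(i)})\mid\boldsymbol{s}] = 0$. Finiteness of this product's expectation follows from Cauchy--Schwarz, since both factors have finite second moments (note $\operatorname{Var}Q\le\sigma^2$).

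Together these give $\mathbb{E}[A_SA_S^{*}\mid\boldsymbol{s}] = \frac{N_s-1}{N_s}\operatorname{Var}_{\boldsymbol{a}}[Q(\boldsymbol{s},\boldsymbol{a})]$.

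\textbf{Denominator.} Square the representation. Every cross term $\mathbb{E}[D_jD_k\mid\boldsymbol{s}]$ with $j\neq k$, including the pairs $(i,j)$, vanishes by independence and zero mean. Only the diagonal terms remain:
\[
\mathbb{E}[A_S^2\mid\boldsymbol{s}] = \Bigl(\frac{N_s-1}{N_s}\Bigr)^{2}\sigma^2 + \frac{N_s-1}{N_s^{2}}\sigma^2 = \frac{N_s-1}{N_s}\,\sigma^2 .
\]

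There is no genuine obstacle in this argument. The one point that needs care is the numerator's $j\neq i$ terms: I must state explicitly that $A_S^{*}(\boldsymbol{s},\boldsymbol{a}^{(i)})$ is measurable with respect to sample $i$ alone, so that independence across samples applies. I must also state that the expectation is taken jointly over all group members and over $\boldsymbol{a}^{(i)}\sim\pi(\cdot\mid\boldsymbol{s})$, consistent with the objective in Eq.~\eqref{eq:ms-objective}.
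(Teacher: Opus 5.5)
Your proof is correct and follows essentially the same route as the paper. Both arguments split the group mean into the own-sample and other-sample parts, obtain the denominator as the variance of a linear combination of conditionally independent returns, and obtain the numerator by conditioning on $(\boldsymbol{s},\boldsymbol{a}^{(i)})$ so that only the own-sample term contributes; the only cosmetic difference is that you center at $V(\boldsymbol{s})$ and expand term by term, whereas the paper first computes $\mathbb{E}[A_S\mid\boldsymbol{s},\boldsymbol{a}^{(i)}]=\frac{N_s-1}{N_s}A_S^{*}$ and then applies the tower property.
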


\begin{proof}
Fix any sample index $i$ and write $A_S = A_S(\boldsymbol{s},\boldsymbol{a}^{(i)})$ and $A_S^{*} = A_S^{*}(\boldsymbol{s},\boldsymbol{a}^{(i)})$. All expectations below are over the sampled group, conditional on the indicated state or state--action pair. Since the group mean includes sample $i$ itself,
\[
A_S = R^{(i)} - \hat{V}_{N_s}
= \frac{N_s-1}{N_s}R^{(i)} - \frac{1}{N_s}\sum_{j\neq i}R^{(j)}.
\]
The two terms on the right are independent conditional on $\boldsymbol{s}$.

\emph{Denominator.} Every return has conditional mean $V(\boldsymbol{s})$, so
\[
\mathbb{E}[A_S\mid\boldsymbol{s}]
= \frac{N_s-1}{N_s}V(\boldsymbol{s}) - \frac{N_s-1}{N_s}V(\boldsymbol{s}) = 0.
\]
Thus the second moment equals the variance. Conditional independence and the identical return distributions give
\begin{align*}
\mathbb{E}[A_S^{2}\mid\boldsymbol{s}]
&= \operatorname{Var}(A_S\mid\boldsymbol{s}) \\
&= \Bigl(\frac{N_s-1}{N_s}\Bigr)^{2}\operatorname{Var}(R\mid\boldsymbol{s})
+ \frac{N_s-1}{N_s^{2}}\operatorname{Var}(R\mid\boldsymbol{s}) \\
&= \frac{N_s-1}{N_s}\,\operatorname{Var}(R\mid\boldsymbol{s}),
\end{align*}
which proves denominator.

\emph{Numerator.} Conditional on $(\boldsymbol{s},\boldsymbol{a}^{(i)})$, the current return has mean $Q(\boldsymbol{s},\boldsymbol{a}^{(i)})$, whereas each other return still has mean $V(\boldsymbol{s})$ by independence. Therefore,
\begin{align*}
\mathbb{E}[A_S\mid\boldsymbol{s},\boldsymbol{a}^{(i)}]
&= \frac{N_s-1}{N_s}Q(\boldsymbol{s},\boldsymbol{a}^{(i)}) - \frac{N_s-1}{N_s}V(\boldsymbol{s}) \\
&= \frac{N_s-1}{N_s}A_S^{*}.
\end{align*}
Since $A_S^{*}$ is determined by $(\boldsymbol{s},\boldsymbol{a}^{(i)})$, the law of iterated expectations yields
\begin{align*}
\mathbb{E}[A_S\,A_S^{*}\mid\boldsymbol{s}]
&= \mathbb{E}\!\left[A_S^{*}\,\mathbb{E}[A_S\mid\boldsymbol{s},\boldsymbol{a}^{(i)}]\,\middle|\,\boldsymbol{s}\right] \\
&= \frac{N_s-1}{N_s}\,\mathbb{E}[(A_S^{*})^{2}\mid\boldsymbol{s}] \\
&= \frac{N_s-1}{N_s}\,\operatorname{Var}_{\boldsymbol{a}\sim\pi(\cdot\mid\boldsymbol{s})}\!\bigl[Q(\boldsymbol{s},\boldsymbol{a})\bigr].
\end{align*}
The last equality follows from $A_S^{*}=Q(\boldsymbol{s},\boldsymbol{a}^{(i)})-V(\boldsymbol{s})$ and $V(\boldsymbol{s})=\mathbb{E}_{\boldsymbol{a}\sim\pi(\cdot\mid\boldsymbol{s})}[Q(\boldsymbol{s},\boldsymbol{a})]$. This proves and completes the proof of Lemma~\ref{lem:closed-form} in the main text.
\end{proof}

\begin{lemma}[Conditional return-variance decomposition]
\label{lem:return-variance}
Fix a state $\boldsymbol{s}$ and a policy $\pi$, and assume $\mathbb{E}[R^{2}\mid\boldsymbol{s}]<\infty$. Let $Q(\boldsymbol{s},\boldsymbol{a})=\mathbb{E}[R\mid\boldsymbol{s},\boldsymbol{a}]$ and $V(\boldsymbol{s})=\mathbb{E}[R\mid\boldsymbol{s}]$. Then the total return variance at $\boldsymbol{s}$ satisfies~\eqref{eq:tva}:
\[
\operatorname{Var}(R\mid\boldsymbol{s})
= \operatorname{Var}_{\boldsymbol{a}\sim\pi(\cdot\mid\boldsymbol{s})}\!\bigl[Q(\boldsymbol{s},\boldsymbol{a})\bigr]
+ \mathbb{E}_{\boldsymbol{a}\sim\pi(\cdot\mid\boldsymbol{s})}\!\bigl[\operatorname{Var}(R\mid\boldsymbol{s},\boldsymbol{a})\bigr].
\]
\end{lemma}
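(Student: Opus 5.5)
The plan is to prove the decomposition as the conditional law of total variance, using the tower property for conditional expectations given $\boldsymbol{s}$, with the action $\boldsymbol{a}\sim\pi(\cdot\mid\boldsymbol{s})$ as the intermediate conditioning variable. The hypothesis $\mathbb{E}[R^{2}\mid\boldsymbol{s}]<\infty$ guarantees that every quantity below is finite and that all interchanges of expectations are legitimate. This matters because $Q(\boldsymbol{s},\boldsymbol{a})$ and $\operatorname{Var}(R\mid\boldsymbol{s},\boldsymbol{a})$ are then square-integrable, respectively integrable, with respect to $\pi(\cdot\mid\boldsymbol{s})$, by conditional Jensen.

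First, write the deviation of the return from the state value as a sum of two pieces:
\[
R - V(\boldsymbol{s}) = \bigl(R - Q(\boldsymbol{s},\boldsymbol{a})\bigr) + \bigl(Q(\boldsymbol{s},\boldsymbol{a}) - V(\boldsymbol{s})\bigr).
\]
The first piece is within-action noise and the second is $A_S^{*}(\boldsymbol{s},\boldsymbol{a})$. Squaring and taking $\mathbb{E}[\cdot\mid\boldsymbol{s}]$ gives three terms.
\begin{itemize}
\item The squared first piece, after conditioning on $(\boldsymbol{s},\boldsymbol{a})$ and then averaging over $\boldsymbol{a}$, equals $\mathbb{E}_{\boldsymbol{a}}[\operatorname{Var}(R\mid\boldsymbol{s},\boldsymbol{a})]$.
\item The squared second piece equals $\mathbb{E}_{\boldsymbol{a}}[(Q-V)^{2}]$. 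This is $\operatorname{Var}_{\boldsymbol{a}}[Q(\boldsymbol{s},\boldsymbol{a})]$, since $V(\boldsymbol{s})=\mathbb{E}_{\boldsymbol{a}}[Q(\boldsymbol{s},\boldsymbol{a})\mid\boldsymbol{s}]$, which is one of the identities recorded at the start of this appendix.
\end{itemize}

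The only step needing care is the cross term $2\,\mathbb{E}[(R-Q)(Q-V)\mid\boldsymbol{s}]$. I will condition on $(\boldsymbol{s},\boldsymbol{a})$ and pull out the factor $Q(\boldsymbol{s},\boldsymbol{a})-V(\boldsymbol{s})$, which is measurable with respect to that conditioning. What remains is $\mathbb{E}[R-Q(\boldsymbol{s},\boldsymbol{a})\mid\boldsymbol{s},\boldsymbol{a}]=0$ by the definition of $Q$. Integrability of the product holds by Cauchy--Schwarz under the second-moment assumption. The cross term therefore vanishes, and summing the two surviving terms yields~\eqref{eq:tva}. This is the main technical point, and it is routine.
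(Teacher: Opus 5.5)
Your proposal is correct and follows essentially the same route as the paper: split $R-V(\boldsymbol{s})$ into $\bigl(Q(\boldsymbol{s},\boldsymbol{a})-V(\boldsymbol{s})\bigr)+\bigl(R-Q(\boldsymbol{s},\boldsymbol{a})\bigr)$, remove the cross term by conditioning on the action and using $\mathbb{E}[R-Q(\boldsymbol{s},\boldsymbol{a})\mid\boldsymbol{s},\boldsymbol{a}]=0$, and identify the two surviving squares as the across-action variance of $Q$ and the averaged within-action variance. Your integrability remarks (conditional Jensen for $Q$ and $\operatorname{Var}(R\mid\boldsymbol{s},\boldsymbol{a})$, Cauchy--Schwarz for the cross term) spell out what the paper's proof leaves implicit.
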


\begin{proof}
By the law of iterated expectations,
\[
V(\boldsymbol{s})
= \mathbb{E}_{\boldsymbol{a}\sim\pi(\cdot\mid\boldsymbol{s})}\!\bigl[Q(\boldsymbol{s},\boldsymbol{a})\bigr].
\]
Decompose the centered return as
\[
R-V(\boldsymbol{s})
= \bigl(Q(\boldsymbol{s},\boldsymbol{a})-V(\boldsymbol{s})\bigr)
+ \bigl(R-Q(\boldsymbol{s},\boldsymbol{a})\bigr).
\]
The first term depends only on the current action when $\boldsymbol{s}$ is fixed, whereas the second is the difference between the realized return $R$ and its conditional expected return $Q(\boldsymbol{s},\boldsymbol{a})$. By the definition of $Q$,
\[
\mathbb{E}\!\bigl[R-Q(\boldsymbol{s},\boldsymbol{a})\mid\boldsymbol{s},\boldsymbol{a}\bigr]=0.
\]
Since $Q(\boldsymbol{s},\boldsymbol{a})-V(\boldsymbol{s})$ is fixed conditional on $(\boldsymbol{s},\boldsymbol{a})$, conditioning on the action and then averaging shows that the cross term vanishes:
\[
\mathbb{E}\!\left[\bigl(Q(\boldsymbol{s},\boldsymbol{a})-V(\boldsymbol{s})\bigr)\bigl(R-Q(\boldsymbol{s},\boldsymbol{a})\bigr)\,\middle|\,\boldsymbol{s}\right]=0.
\]
Squaring the centered-return decomposition and taking conditional expectations therefore gives
\begin{align*}
\operatorname{Var}(R\mid\boldsymbol{s})
&= \mathbb{E}\!\left[\bigl(R-V(\boldsymbol{s})\bigr)^{2}\,\middle|\,\boldsymbol{s}\right] \\
&= \mathbb{E}_{\boldsymbol{a}\sim\pi(\cdot\mid\boldsymbol{s})}\!\left[\bigl(Q(\boldsymbol{s},\boldsymbol{a})-V(\boldsymbol{s})\bigr)^{2}\right]
+ \mathbb{E}\!\left[\bigl(R-Q(\boldsymbol{s},\boldsymbol{a})\bigr)^{2}\,\middle|\,\boldsymbol{s}\right] \\
&= \operatorname{Var}_{\boldsymbol{a}\sim\pi(\cdot\mid\boldsymbol{s})}\!\bigl[Q(\boldsymbol{s},\boldsymbol{a})\bigr]
+ \mathbb{E}_{\boldsymbol{a}\sim\pi(\cdot\mid\boldsymbol{s})}\!\bigl[\operatorname{Var}(R\mid\boldsymbol{s},\boldsymbol{a})\bigr].
\end{align*}
In the last equality, the first term is the variance of $Q(\boldsymbol{s},\boldsymbol{a})$ across actions because its mean is $V(\boldsymbol{s})$. For the second term, $\bigl(R-Q(\boldsymbol{s},\boldsymbol{a})\bigr)^{2}$ is the squared difference between the realized return and its expected value for a fixed state--action pair. Its expectation conditional on $(\boldsymbol{s},\boldsymbol{a})$ is, by definition, $\operatorname{Var}(R\mid\boldsymbol{s},\boldsymbol{a})$; averaging over actions gives the expected conditional return variance in~\eqref{eq:tva}. This completes the proof.
\end{proof}

\begin{theorem}[Theorem~\ref{thm:w-computable} in the main text]
\label{thm:appendix-w-computable}
Under the assumptions of Theorem~\ref{thm:w-star} and Lemma~\ref{lem:closed-form}, the optimal per-state shrinkage weight~\eqref{eq:w-star} is
\[
w^{*}(\boldsymbol{s}) = \frac{\operatorname{Var}_{\boldsymbol{a}\sim\pi(\cdot\mid\boldsymbol{s})}\!\bigl[Q(\boldsymbol{s},\boldsymbol{a})\bigr]}{\operatorname{Var}_{\boldsymbol{a}\sim\pi(\cdot\mid\boldsymbol{s})}\!\bigl[Q(\boldsymbol{s},\boldsymbol{a})\bigr] + \mathbb{E}_{\boldsymbol{a}\sim\pi(\cdot\mid\boldsymbol{s})}\!\bigl[\operatorname{Var}(R\mid\boldsymbol{s},\boldsymbol{a})\bigr]}\;\in[0,1],
\]
or, equivalently,
\[
w^{*}(\boldsymbol{s}) = 1 - \frac{\mathbb{E}_{\boldsymbol{a}\sim\pi(\cdot\mid\boldsymbol{s})}\!\bigl[\operatorname{Var}(R\mid\boldsymbol{s},\boldsymbol{a})\bigr]}{\operatorname{Var}(R\mid\boldsymbol{s})}.
\]
\end{theorem}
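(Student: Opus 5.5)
The plan is to substitute the closed forms of Lemma~\ref{lem:appendix-closed-form} into the projection formula of Theorem~\ref{thm:appendix-w-star}. Then Lemma~\ref{lem:return-variance} converts the result into each of the two stated expressions. The order of steps is as follows.

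First, by Theorem~\ref{thm:appendix-w-star}, $w^{*}(\boldsymbol{s})$ is the ratio $\mathbb{E}[A_S A_S^{*}\mid\boldsymbol{s}]/\mathbb{E}[A_S^{2}\mid\boldsymbol{s}]$. This ratio is well defined because the denominator is assumed strictly positive. Second, I insert the closed forms of Lemma~\ref{lem:appendix-closed-form}: the numerator is $\frac{N_s-1}{N_s}\operatorname{Var}_{\boldsymbol{a}}[Q(\boldsymbol{s},\boldsymbol{a})]$ and the denominator is $\frac{N_s-1}{N_s}\operatorname{Var}(R\mid\boldsymbol{s})$. Since $N_s\geqslant 2$, the common factor $(N_s-1)/N_s$ is strictly positive and cancels. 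Positivity of the denominator then forces $\operatorname{Var}(R\mid\boldsymbol{s})>0$, so
\[
w^{*}(\boldsymbol{s})=\frac{\operatorname{Var}_{\boldsymbol{a}}[Q(\boldsymbol{s},\boldsymbol{a})]}{\operatorname{Var}(R\mid\boldsymbol{s})}.
\]

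Third, I expand the denominator with the law of total variance, Lemma~\ref{lem:return-variance}, whose finite-second-moment hypothesis is exactly the finite return variance assumed in Lemma~\ref{lem:closed-form}. This immediately yields the signal-to-total-variance form~\eqref{eq:w-signal}. The membership $w^{*}\in[0,1]$ follows because both summands in the denominator are nonnegative: one is a variance, the other an average of variances. The numerator is one of these summands and the total is strictly positive. Fourth, for the form~\eqref{eq:w-computable}, I write $\operatorname{Var}_{\boldsymbol{a}}[Q]=\operatorname{Var}(R\mid\boldsymbol{s})-\mathbb{E}_{\boldsymbol{a}}[\operatorname{Var}(R\mid\boldsymbol{s},\boldsymbol{a})]$, again by Lemma~\ref{lem:return-variance}, and divide by $\operatorname{Var}(R\mid\boldsymbol{s})$.

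There is no substantive obstacle, since every ingredient is an earlier result. The only point needing care is bookkeeping: the expectations in Theorem~\ref{thm:appendix-w-star} must be taken over the same joint law as in Lemma~\ref{lem:appendix-closed-form}, namely over a fixed index $i$ and the i.i.d.\ group conditional on $\boldsymbol{s}$. By exchangeability the result does not depend on $i$, so the projection's numerator and denominator coincide with the lemma's quantities. Beyond that, one only has to note explicitly that $\operatorname{Var}(R\mid\boldsymbol{s})>0$, which is what justifies the division in both forms.
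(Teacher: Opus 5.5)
Your proposal is correct and follows essentially the same route as the paper. Both substitute the closed forms of Lemma~\ref{lem:appendix-closed-form} into the projection ratio, cancel $(N_s-1)/N_s$, deduce $\operatorname{Var}(R\mid\boldsymbol{s})>0$ from the positive-second-moment assumption, and apply Lemma~\ref{lem:return-variance} to get both forms and $w^{*}(\boldsymbol{s})\in[0,1]$. The paper's proof also remarks on uniqueness of the minimizer and the degenerate case $\operatorname{Var}(R\mid\boldsymbol{s})=0$, which the stated identities do not need.
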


\begin{proof}

\emph{Existence and uniqueness.}
 A nonzero step-level correction requires the group to contain at least two different return values.  Thus, the observed advantages of an equal-return group cannot determine a unique weight.

 The theorem concerns the population mean-square objective, whose expectation averages over sampled groups rather than conditioning on one realized group. For any fixed group member, the estimate $A_S$ has zero conditional mean because every sampled return has mean $V(\boldsymbol{s})$. By conditional independence of the returns,
\[
\mathbb{E}[A_S^{2}\mid\boldsymbol{s}]
= \left[\Bigl(\frac{N_s-1}{N_s}\Bigr)^{2}+\frac{N_s-1}{N_s^{2}}\right]\operatorname{Var}(R\mid\boldsymbol{s})
= \frac{N_s-1}{N_s}\,\operatorname{Var}(R\mid\boldsymbol{s})>0.
\]
The strict inequality is an assumption of Theorem~\ref{thm:w-star}. Since $N_s\geqslant 2$, this identity also implies $\operatorname{Var}(R\mid\boldsymbol{s})>0$. The finite return variance also ensures finite second moments of $A_S$ and $A_S^{*}$. Thus the mean-square objective is a finite, strictly convex quadratic in $w$, and Theorem~\ref{thm:appendix-w-star} gives its unique minimizer as the ratio in~\eqref{eq:w-star}.

An individual sampled group may still have identical returns even when $\operatorname{Var}(R\mid\boldsymbol{s})>0$; this does not contradict population-level uniqueness. By contrast, if $\operatorname{Var}(R\mid\boldsymbol{s})=0$, then $R=V(\boldsymbol{s})$ almost surely and $Q(\boldsymbol{s},\boldsymbol{a})=V(\boldsymbol{s})$ for $\pi(\cdot\mid\boldsymbol{s})$-almost every action. Both $A_S$ and $A_S^{*}$ are then zero almost surely, every weight minimizes the objective, and the variance-ratio formulas are undefined. This degenerate case is excluded by the positive-second-moment assumption.

\emph{Interpretive form~\eqref{eq:w-signal}.} Substituting the two moment identities from Lemma~\ref{lem:appendix-closed-form} into~\eqref{eq:w-star} cancels the common factor $\frac{N_s-1}{N_s}$:
\[
w^{*}(\boldsymbol{s}) = \frac{\operatorname{Var}_{\boldsymbol{a}}[Q(\boldsymbol{s},\boldsymbol{a})\mid\boldsymbol{s}]}{\operatorname{Var}(R\mid\boldsymbol{s})}.
\]
Expanding the denominator by Lemma~\ref{lem:return-variance} gives~\eqref{eq:w-signal}:
\[
w^{*}(\boldsymbol{s}) = \frac{\operatorname{Var}_{\boldsymbol{a}\sim\pi(\cdot\mid\boldsymbol{s})}[Q(\boldsymbol{s},\boldsymbol{a})]}{\operatorname{Var}_{\boldsymbol{a}\sim\pi(\cdot\mid\boldsymbol{s})}[Q(\boldsymbol{s},\boldsymbol{a})] + \mathbb{E}_{\boldsymbol{a}\sim\pi(\cdot\mid\boldsymbol{s})}[\operatorname{Var}(R\mid\boldsymbol{s},\boldsymbol{a})]}.
\]
The denominator equals the strictly positive total return variance, and both variance components are non-negative. Therefore $w^{*}(\boldsymbol{s})\in[0,1]$. It equals $1$ exactly when the expected conditional return variance is zero, and $0$ exactly when $Q(\boldsymbol{s},\boldsymbol{a})$ is constant for $\pi(\cdot\mid\boldsymbol{s})$-almost every action.

\emph{Computable form~\eqref{eq:w-computable}.} By the law of total variance~\eqref{eq:tva}, the action-value variance equals the total return variance minus the expected conditional return variance. Substituting this identity yields
\begin{align*}
w^{*}(\boldsymbol{s}) &= \frac{\operatorname{Var}(R\mid\boldsymbol{s}) - \mathbb{E}_{\boldsymbol{a}\sim\pi(\cdot\mid\boldsymbol{s})}[\operatorname{Var}(R\mid\boldsymbol{s},\boldsymbol{a})]}{\operatorname{Var}(R\mid\boldsymbol{s})} \\
&= 1 - \frac{\mathbb{E}_{\boldsymbol{a}\sim\pi(\cdot\mid\boldsymbol{s})}[\operatorname{Var}(R\mid\boldsymbol{s},\boldsymbol{a})]}{\operatorname{Var}(R\mid\boldsymbol{s})}.
\end{align*}
This proves the equivalent computable form~\eqref{eq:w-computable} and completes the proof of Theorem~\ref{thm:w-computable} in the main text.
\end{proof}

\clearpage
\section{Pseudo Code}
\label{sec:adastep-pseudocode}

Algorithm~\ref{alg:adastep} summarizes the AdaStep training procedure, following the rollout, grouping, and policy-update structure of GiGPO~\citep{feng2025gigpo}. AdaStep replaces the fixed coefficient $\omega$ with a coefficient estimated separately for each anchor-state group; the corresponding operations are italicized in the algorithm. Within each task, we use anchor states as hash-table keys to collect step-level groups, and further partition each group by the observed action to estimate within-action return variability. These statistics can be accumulated using counts, sums, and sums of squared returns, without an additional critic or additional rollouts.

\paragraph{Computing the adaptive coefficient.}
For a trajectory $\tau_i$ of length $T_i$, let $R_{i,t}=\sum_{k=t}^{T_i}\gamma^{k-t}r_{i,k}$ denote the discounted return from step $t$. Write $\tilde{s}_{i,t}$ for its anchor-state key, while $s_{i,t}$ denotes the state or context conditioning the policy. For each task, define the index group $\mathcal{G}(\tilde{s})=\{(i,t):\tilde{s}_{i,t}=\tilde{s}\}$. Let $\mathcal{A}_{\tilde{s}}$ be the set of distinct actions observed in the group, and define
\begin{equation}
\begin{aligned}
\mathcal{G}(\tilde{s},a)
  &=\{(i,t)\in\mathcal{G}(\tilde{s}):a_{i,t}=a\},\\
N_{\tilde{s}}&=|\mathcal{G}(\tilde{s})|,\qquad
n_{\tilde{s},a}=|\mathcal{G}(\tilde{s},a)|,\qquad
\widehat p(a\mid\tilde{s})=\frac{n_{\tilde{s},a}}{N_{\tilde{s}}}.
\end{aligned}
\label{eq:adastep-action-groups}
\end{equation}
Denote the empirical variance of returns in $\mathcal{G}(\tilde{s})$ by $\widehat v_{\tilde{s}}$, and the empirical variance within $\mathcal{G}(\tilde{s},a)$ by $\widehat v_{\tilde{s},a}$. The variance denominators are $N_{\tilde{s}}$ for the full group and $n_{\tilde{s},a}$ for each action subgroup. Substituting these statistics into Eq.~\eqref{eq:w-computable} gives the empirical coefficient
\begin{equation}
\widehat w(\tilde{s})
=1-\frac{\displaystyle\sum_{a\in\mathcal{A}_{\tilde{s}}}
\widehat p(a\mid\tilde{s})\,\widehat v_{\tilde{s},a}}
{\widehat v_{\tilde{s}}}.
\label{eq:adastep-empirical-weight}
\end{equation}
Here $\widehat w$ is a finite-sample estimate of the population coefficient $w^*$, and all variance statistics use the unnormalized discounted returns. We evaluate Eq.~\eqref{eq:adastep-empirical-weight} only if $|\mathcal{A}_{\tilde{s}}|\geq 2$, $n_{\tilde{s},a}\geq 2$ for every observed action, and $\widehat v_{\tilde{s}}>0$. Otherwise, we set $\widehat w(\tilde{s})=1$, following the fallback choice in Section~\ref{sec:fallback}. If the group returns are identical, we set the step-level advantages to zero and skip the variance ratio. The combined advantage is
\begin{equation}
A_{i,t}=A_E(\tau_i)+\widehat w(\tilde{s}_{i,t})A_S(s_{i,t},a_{i,t}),
\label{eq:adastep-combined-advantage}
\end{equation}
where $A_S$ uses the benchmark-specific normalization in Appendix~\ref{sec:experimental-details}. All coefficients and advantages are computed from the old-policy rollouts and held fixed during policy optimization.

\clearpage
\begin{algorithm}[H]
\caption{Training LLM Agents with AdaStep}
\label{alg:adastep}
\small
\begin{algorithmic}[1]
\Require Initial policy $\pi_\theta$, fixed reference policy $\pi_{\mathrm{ref}}$, task distribution $p(X)$
\Require Group size $N$, maximum episode length $T_{\max}$, discount $\gamma$, clipping parameter $\epsilon$, KL coefficient $\beta$, and benchmark-specific step-normalization setting
\For{each training iteration}
  \State $\theta_{\mathrm{old}}\gets\theta$; sample a task batch from $p(X)$; initialize buffer $\mathcal B\gets\varnothing$
  \For{each sampled task $x$}
    \State \textbf{Multi-step rollout:} initialize $N$ identical environments for $x$
    \For{$t=1,\ldots,T_{\max}$, until all environments terminate}
      \State Sample $a_{i,t}\sim\pi_{\theta_{\mathrm{old}}}(\cdot\mid s_{i,t},x)$ for each active environment $i$
      \State Execute actions, record $(s_{i,t},a_{i,t},r_{i,t},s_{i,t+1})$, and stop terminated episodes
    \EndFor
    \State Compute $R_E(\tau_i)=\sum_{t=1}^{T_i}r_{i,t}$ and $R_{i,t}=\sum_{k=t}^{T_i}\gamma^{k-t}r_{i,k}$
    \State Compute $A_E(\tau_i)$ using Eq.~\eqref{eq:grpo-adv}; set $A_E=0$ if all episode returns agree
    \State \textbf{Grouping:} hash transitions by anchor state to build $\mathcal G(\tilde{s})$ within task $x$
    \For{each anchor-state group $\mathcal G(\tilde{s})$}
      \State Compute the group return mean $\overline R_{\tilde{s}}$
      \State \textit{Initialize the fallback coefficient $\widehat w(\tilde{s})\gets 1$}
      \If{all returns in $\mathcal G(\tilde{s})$ are identical}
        \State Set $A_S(s_{i,t},a_{i,t})\gets 0$ for all $(i,t)\in\mathcal G(\tilde{s})$
      \Else
        \State Compute $\widehat v_{\tilde{s}}$ and set $A_S(s_{i,t},a_{i,t})\gets R_{i,t}-\overline R_{\tilde{s}}$
        \State Divide $A_S$ by $\sqrt{\widehat v_{\tilde{s}}}$ if the benchmark uses w/ std
        \State \textit{Partition by action to obtain $\mathcal G(\tilde{s},a)$ and counts $n_{\tilde{s},a}$}
        \If{$|\mathcal A_{\tilde{s}}|\geq 2$ and $n_{\tilde{s},a}\geq 2$ for every observed action $a$}
          \State \textit{Compute within-action variances $\widehat v_{\tilde{s},a}$ from the raw returns}
          \State \textit{Set $\widehat w(\tilde{s})\gets 1-\displaystyle\frac{\sum_a(n_{\tilde{s},a}/N_{\tilde{s}})\widehat v_{\tilde{s},a}}{\widehat v_{\tilde{s}}}$}
        \EndIf
      \EndIf
      \State \textit{Combine $A_{i,t}\gets A_E(\tau_i)+\widehat w(\tilde{s})A_S(s_{i,t},a_{i,t})$ for each member}
    \EndFor
    \State Store the task's valid transitions and fixed advantages in $\mathcal B$
  \EndFor
  \State \textbf{Policy update:} optimize $\theta$ on $\mathcal B$ by maximizing Eq.~\eqref{eq:adastep-objective}
\EndFor
\Ensure Trained policy $\pi_\theta$
\end{algorithmic}
\end{algorithm}

\subsection{Policy Optimization Objective}
\label{sec:adastep-objective}

AdaStep uses the clipped policy objective of GiGPO with the adaptive combined advantage in Eq.~\eqref{eq:adastep-combined-advantage}. For an action sampled at state $s_{i,t}$ under task $x$, the importance sampling ratio is
\begin{equation}
\rho_{i,t}(\theta)
=\frac{\pi_\theta(a_{i,t}\mid s_{i,t},x)}
{\pi_{\theta_{\mathrm{old}}}(a_{i,t}\mid s_{i,t},x)}.
\label{eq:adastep-importance-ratio}
\end{equation}
Let $M=\sum_{i=1}^N T_i$ be the number of valid environment steps in a task group. We maximize
\begin{equation}
\begin{aligned}
\mathcal J_{\mathrm{AdaStep}}(\theta)
={}&\mathbb E_{\substack{x\sim p(X)\\
\{\tau_i\}_{i=1}^N\sim\pi_{\theta_{\mathrm{old}}}(\cdot\mid x)}}
\Bigg[\frac{1}{M}\sum_{i=1}^N\sum_{t=1}^{T_i}
\Big(\\[-2pt]
&\min\!\left\{\rho_{i,t}(\theta)A_{i,t},\;
\operatorname{clip}\!\left(\rho_{i,t}(\theta),1-\epsilon,1+\epsilon\right)A_{i,t}\right\}\\
&-\beta D_{\mathrm{KL}}\!\left(
\pi_\theta(\cdot\mid s_{i,t},x)\,\middle\|\,
\pi_{\mathrm{ref}}(\cdot\mid s_{i,t},x)\right)
\Big)\Bigg].
\end{aligned}
\label{eq:adastep-objective}
\end{equation}
The parameter $\epsilon$ defines the importance-ratio clipping interval, and $\beta$ controls the KL penalty toward the fixed reference policy $\pi_{\mathrm{ref}}$. The old policy is refreshed before each rollout iteration. For a task batch, we average the per-task terms in Eq.~\eqref{eq:adastep-objective}. Only valid steps contribute; if every trajectory has length $T$, then $M=NT$. Replacing $\widehat w(\tilde{s}_{i,t})$ with a fixed $\omega$ in Eq.~\eqref{eq:adastep-combined-advantage} recovers the GiGPO advantage and its corresponding clipped objective.

\clearpage
\section{Experimental Details}
\label{sec:experimental-details}

\textbf{Benchmarks.}  We evaluate on three long-horizon benchmarks: ALFWorld~\citep{shridhar2021alfworld}, WebShop~\citep{yao2022webshop}, and ScienceWorld~\citep{wang2022scienceworld}. ALFWorld is a text-based embodied environment in which agents complete multi-step household tasks, such as heating, cleaning, and relocating objects, with sparse rewards upon task completion; we set the maximum number of environment steps to 50. WebShop is a simulated shopping environment with 1.18 million products, requiring agents to search, select product options, and purchase items that satisfy natural-language instructions; we set the maximum number of environment steps to 30. ScienceWorld is a text-based scientific reasoning environment in which agents conduct virtual experiments such as measuring melting points and testing electrical conductivity; we set the maximum number of environment steps to 30. In our experiments, ScienceWorld evaluation is restricted to eight simple tasks.

\textbf{Baselines.} We compare against baselines across three categories:
(1) \emph{Frontier models:} DeepSeek-V4.1-Flash~\citep{xu2026deepseek} and
GLM-5.2~\citep{glm5team2026glm5vibecodingagentic}, evaluated with ReAct prompting~\citep{yao2023react} as reference points
for frontier model capabilities.
(2) \emph{Prompting methods:} ReAct~\citep{yao2023react}, which interleaves reasoning and actions,
and Reflexion~\citep{shinn2023reflexion}, which uses verbal self-reflection and
episodic memory to improve subsequent attempts without updating model parameters.
(3) \emph{RL training methods:} PPO~\citep{schulman2017ppo}, a standard
actor--critic algorithm; GRPO~\citep{shao2024deepseekmath} and
GiGPO~\citep{feng2025gigpo}, which estimate relative advantages from trajectory
groups without learned critics, with GiGPO additionally using step-level groups
for fine-grained credit assignment; and HGPO~\citep{he2026hgpo}, which incorporates
historical context into anchor-state grouping and adaptively aggregates advantages
across hierarchical groups to reduce bias in step-level advantage estimation.

\textbf{Implementation.} We use Qwen2.5-7B-Instruct~\citep{qwen2025qwen25technicalreport}, Qwen3-4B, and Qwen3-1.7B~\citep{yang2025qwen3technicalreport} as backbone models. All Qwen3 models are run with their built-in thinking mode disabled throughout training and evaluation. For fair comparison, all RL methods share the same training pipeline and common hyperparameters specified below. We follow the hyperparameter configuration of GiGPO~\citep{feng2025gigpo} for ALFWorld and WebShop, with the WebShop episode limit extended from 15 to 30 steps. AdaStep computes its adaptive weights from the unnormalized return statistics of step-level groups, replacing GiGPO's fixed coefficient $\omega$ without introducing additional tunable hyperparameters.
For HGPO, we set the maximum historical context length to $K=2$ throughout our experiments.

\textbf{Hyperparameters for ALFWorld.} All RL methods use a maximum prompt length of 2,048 tokens and a maximum response length of 512 tokens, with at most 50 environment steps per episode. Training uses a rule-based task reward of $10$ for success and $0$ for failure, together with a reward penalty of $-0.1$ for each invalid action. These practical rewards rescale the theoretical binary success signal and add an invalid-action penalty. For group-based methods (RLOO, GRPO, GiGPO, HGPO, and AdaStep), each rollout batch contains 16 distinct task groups with 8 trajectories per group, yielding $16\times 8=128$ environment instances. PPO instead collects rollouts from 128 separate environment instances without arranging them into trajectory groups. The sampling temperature is $1.0$ during training rollouts and $0.4$ during validation.

The actor learning rate is $1\times 10^{-6}$ for all RL methods. PPO additionally trains a critic with a learning rate of $1\times 10^{-5}$; the group-based methods do not use a critic. We use a mini-batch size of 256 and a KL-divergence loss coefficient of $0.01$. For the GiGPO baseline, the step-level advantage coefficient is fixed at $\omega=1$ without further tuning, and the return discount factor is $\gamma=0.95$. For the main results in Table~\ref{tab:main}, GiGPO, HGPO, and AdaStep compute step-level advantages with standard-deviation normalization (\emph{w/ std}).

\textbf{Hyperparameters for WebShop.} All RL methods use a maximum prompt length of 4,096 tokens and a maximum response length of 512 tokens. Each episode is limited to 30 environment steps. We use the same rule-based reward as in ALFWorld: $10$ for task success, $0$ for failure, and a penalty of $-0.1$ for each invalid action. Each group-based method samples 16 task groups per rollout batch, with 8 trajectories in each group, for a total of 128 environment instances. PPO uses 128 separate environment instances. The rollout and validation temperatures are $1.0$ and $0.4$, respectively.

The actor learning rate is $1\times 10^{-6}$, and the critic learning rate, applicable only to PPO, is $1\times 10^{-5}$. The mini-batch size is 64, and the KL-divergence loss coefficient is $0.01$. As in ALFWorld, the GiGPO baseline uses a fixed step-level advantage coefficient $\omega=1$ without additional tuning and a discount factor $\gamma=0.95$. For the main results in Table~\ref{tab:main}, GiGPO, HGPO, and AdaStep compute step-level advantages without standard-deviation normalization (\emph{w/o std}).

\textbf{Hyperparameters for ScienceWorld.} We evaluate on eight simple ScienceWorld tasks. All RL methods use a maximum prompt length of 2,048 tokens and a maximum response length of 512 tokens, with at most 30 environment steps per episode. Training uses a rule-based task reward of $10$ for success and $0$ for failure, together with a reward penalty of $-0.1$ for each invalid action. For group-based methods (RLOO, GRPO, GiGPO, HGPO, and AdaStep), each rollout batch contains 16 distinct task groups with 8 trajectories per group, yielding $16\times 8=128$ environment instances. PPO instead collects rollouts from 128 separate environment instances without arranging them into trajectory groups. The sampling temperature is $1.0$ during training rollouts and $0.4$ during validation.

The actor learning rate is $1\times 10^{-6}$ for all RL methods. PPO additionally trains a critic with a learning rate of $1\times 10^{-5}$; the group-based methods do not use a critic. We use a mini-batch size of 256 and a KL-divergence loss coefficient of $0.01$. For the GiGPO baseline, the step-level advantage coefficient is fixed at $\omega=1$ without further tuning, and the return discount factor is $\gamma=0.95$. For the main results in Table~\ref{tab:main}, GiGPO, HGPO, and AdaStep compute step-level advantages with standard-deviation normalization (\emph{w/ std}).

\textbf{Computing details.} All experiments were conducted on 8 NVIDIA H100 GPUs for 160 training steps, with each run taking no more than 16 hours. Results reported with means and standard deviations were computed over three independent runs with different random seeds.

\raggedbottom
\section{Prompt Templates}
\label{sec:prompt-template}

Figures~\ref{fig:webshop-prompt}--\ref{fig:scienceworld-prompt} show the
WebShop, ALFWorld, and ScienceWorld prompt templates. Braced placeholders
are filled with task and interaction information at each step. All three
prompts enclose reasoning within \texttt{<thought>} and \texttt{</thought>}
tags, and the chosen action within \texttt{<action>} and \texttt{</action>}
tags. These delimiters specify the response format. All Qwen3 models are
run with their built-in thinking mode disabled, as stated in
Appendix~\ref{sec:experimental-details}.

\begin{figure}[H]
\centering
\begin{tcolorbox}[
  title={Prompt Template for WebShop},
  colback=black!2,
  colframe=black,
  colbacktitle=black,
  coltitle=white,
  fonttitle=\bfseries,
  fontupper=\normalsize,
  boxrule=0.7pt,
  arc=3mm,
  left=3mm,right=3mm,top=2mm,bottom=2mm,
  before skip=0pt,after skip=0pt
]
You are an expert autonomous agent operating in the WebShop e-commerce environment. Your task is to: \promptfield{task\_description}. Prior to this step, you have already taken \promptfield{step\_count} step(s). Below are the most recent \promptfield{history\_length} observations and the corresponding actions you took: \promptfield{action\_history}. You are now at step \promptfield{current\_step} and your current observation is: \promptfield{current\_observation}. Your admissible actions for the current situation are: [\promptfield{available\_actions}].

\smallskip
Now it's your turn to take one action for the current step. You should first reason step-by-step about the current situation, then think carefully which admissible action best advances the shopping goal. This reasoning process MUST be enclosed within \textcolor{promptthought}{\texttt{<thought> </thought>}} tags. Once you've finished your reasoning, you should choose an admissible action for current step and present it within \textcolor{promptaction}{\texttt{<action> </action>}} tags.
\end{tcolorbox}
\caption{The WebShop prompt template with \texttt{<thought>} and \texttt{</thought>} reasoning delimiters.}
\label{fig:webshop-prompt}
\end{figure}

\begin{figure}[H]
\centering
\begin{tcolorbox}[
  title={Prompt Template for ALFWorld},
  colback=black!2,
  colframe=black,
  colbacktitle=black,
  coltitle=white,
  fonttitle=\bfseries,
  fontupper=\normalsize,
  boxrule=0.7pt,
  arc=3mm,
  left=3mm,right=3mm,top=2mm,bottom=2mm,
  before skip=0pt,after skip=0pt
]
You are an expert agent operating in the ALFRED embodied Environment. Your task is to: \promptfield{task\_description}. Prior to this step, you have already taken \promptfield{step\_count} step(s). Below are the most recent \promptfield{history\_length} observations and the corresponding actions you took: \promptfield{action\_history}. You are now at step \promptfield{current\_step} and your current observation is: \promptfield{current\_observation}. Your admissible actions of the current situation are: [\promptfield{admissible\_actions}].

\smallskip
Now it's your turn to take an action. You should first reason step-by-step about the current situation. This reasoning process MUST be enclosed within \textcolor{promptthought}{\texttt{<thought> </thought>}} tags. Once you've finished your reasoning, you should choose an admissible action for current step and present it within \textcolor{promptaction}{\texttt{<action> </action>}} tags.
\end{tcolorbox}
\caption{The ALFWorld prompt template with \texttt{<thought>} and \texttt{</thought>} reasoning delimiters.}
\label{fig:alfworld-prompt}
\end{figure}

\begin{figure}[H]
\centering
\begin{tcolorbox}[
  title={Prompt Template for ScienceWorld},
  colback=black!2,
  colframe=black,
  colbacktitle=black,
  coltitle=white,
  fonttitle=\bfseries,
  fontupper=\normalsize,
  boxrule=0.7pt,
  arc=3mm,
  left=3mm,right=3mm,top=2mm,bottom=2mm,
  before skip=0pt,after skip=0pt
]
You are an expert agent in a text-based interactive science environment (ScienceWorld).

\smallskip
Your task is: \promptfield{task\_description}

\smallskip
Prior to this step, you have already taken \promptfield{step\_count} step(s).
This reasoning process MUST be enclosed within \textcolor{promptthought}{\texttt{<thought> </thought>}} tags.

\smallskip
Once you've finished your reasoning, write the single action you choose,
enclosed within \textcolor{promptaction}{\texttt{<action> </action>}} tags.
\end{tcolorbox}
\caption{The ScienceWorld prompt template with \texttt{<thought>} and \texttt{</thought>} reasoning delimiters.}
\label{fig:scienceworld-prompt}
\end{figure}

\section{Details of the WebShop Step-Credit Analysis}
\label{sec:credit-diagnostic-details}

This appendix describes the offline diagnostics in
Section~\ref{sec:credit-diagnostics}. We compare coefficient rules on the same
historical return samples, rather than collecting a new set of policy-training
runs for this analysis. The comparison contains 3,666 WebShop anchor-state
groups. We write $G_i$ for the recorded discounted step return of sample $i$
and $a_i$ for its action, and suppress the split index below.

\paragraph{Fit and reference subsets.}
Within each anchor-state group $s$, the return observations for each action
are divided into two halves: a fit subset $\mathcal F_{s,a}$ and a disjoint
reference subset $\mathcal H_{s,a}$. This row-level splitting is repeated
32 times per group, and all four coefficient rules use the same splits.
The fit subset is used to estimate group statistics and the adaptive
coefficient. Here, ``fit'' denotes direct empirical estimation; no additional
value model or critic is trained. The reference subset supplies action-level
empirical targets and is not used to choose the adaptive coefficient.

\paragraph{Estimating the coefficient from the fit subset.}
Let $\mathcal F_s=\bigcup_a\mathcal F_{s,a}$,
$n^{\mathrm{fit}}_{s,a}=|\mathcal F_{s,a}|$, and
$N^{\mathrm{fit}}_s=|\mathcal F_s|$. The fit-subset action frequencies and
return means are
\begin{equation}
\begin{aligned}
\widehat p^{\mathrm{fit}}_{s,a}
&=\frac{n^{\mathrm{fit}}_{s,a}}{N^{\mathrm{fit}}_s},
&\widehat\mu^{\mathrm{fit}}_{s,a}
&=\frac{1}{n^{\mathrm{fit}}_{s,a}}
  \sum_{i\in\mathcal F_{s,a}}G_i,\\
\widehat\mu^{\mathrm{fit}}_s
&=\sum_a\widehat p^{\mathrm{fit}}_{s,a}
  \widehat\mu^{\mathrm{fit}}_{s,a},
&X_i&=G_i-\widehat\mu^{\mathrm{fit}}_s,
\quad i\in\mathcal F_s.
\end{aligned}
\label{eq:diagnostic-fit-means}
\end{equation}
The signal $X_i$ is the unnormalized, return-centered step credit. We estimate
within-action return variation and total return variance using only fit rows:
\begin{equation}
\widehat W^{\mathrm{fit}}_s
=\sum_a\widehat p^{\mathrm{fit}}_{s,a}
  \widehat{\operatorname{Var}}_{\mathcal F_{s,a}}(G),
\qquad
\widehat D^{\mathrm{fit}}_s
=\widehat{\operatorname{Var}}_{\mathcal F_s}(G).
\label{eq:diagnostic-fit-variances}
\end{equation}
For this diagnostic, we apply the empirical variance-ratio rule of
Eq.~\eqref{eq:adastep-empirical-weight} to the fit statistics and clip the
resulting coefficient to $[0,1]$:
\begin{equation}
\widehat w_s
=\operatorname{clip}\!\left(
1-\frac{\widehat W^{\mathrm{fit}}_s}{\widehat D^{\mathrm{fit}}_s},0,1\right).
\label{eq:diagnostic-fit-weight}
\end{equation}
We use the fallback coefficient $\widehat w_s=1$ when the fit subset does not
support coefficient estimation, following Section~\ref{sec:fallback}.
Equal-return groups have zero centered step credit. The four rules apply
$c_sX_i$ with $c_s=1$ (GiGPO), $c_s=0.5$ (fixed scaling), $c_s=0$ (step off),
or $c_s=\widehat w_s$ (AdaStep). Thus, fitting consists of computing empirical
means, variances, and a scalar ratio; the coefficient is not optimized against
the reference MSE.

\paragraph{Scaled within-action variation.}
Let $p_{s,a}$ denote the empirical action-frequency weights used in the
diagnostic, $W_s=\sum_a p_{s,a}\widehat{\operatorname{Var}}(G\mid s,a)$
the within-action return variation, and $D_s>0$ the common total-return
variance scale. These diagnostic quantities are held fixed across all
four coefficient rules within each split; the fit-only statistics above
are used to estimate $\widehat w_s$. The first diagnostic is
\begin{equation}
\mathcal V_s(c_s)=\frac{c_s^2W_s}{D_s}.
\label{eq:diagnostic-variation}
\end{equation}
It measures the within-action fluctuation remaining after scaling the local
credit. The factor $c_s^2$ follows from the scaling property of variance.
Consequently, $c_s=0.5$ retains one quarter of the unscaled variation and
$c_s=0$ removes it entirely. This metric measures the stability of the
weighted credit, not a change in the environment's raw return variance;
its reduction alone does not establish improved estimation.

\paragraph{Held-out action-level reference MSE.}
For each action, compute its empirical mean return using reference rows,
then center these means using the diagnostic's action-frequency weights:
\begin{equation}
\overline G^{\mathrm{ref}}_{s,a}
=\frac{1}{|\mathcal H_{s,a}|}\sum_{j\in\mathcal H_{s,a}}G_j,
\qquad
B_{s,a}=\overline G^{\mathrm{ref}}_{s,a}
-\sum_b p_{s,b}\overline G^{\mathrm{ref}}_{s,b}.
\label{eq:diagnostic-reference}
\end{equation}
The second diagnostic compares each fit-row credit with the reference for
its action, using the same positive variance scale for all coefficient rules:
\begin{equation}
\mathcal E_s(c_s)
=\frac{1}{D_sN^{\mathrm{fit}}_s}
\sum_{i\in\mathcal F_s}\left(c_sX_i-B_{s,a_i}\right)^2.
\label{eq:diagnostic-reference-mse}
\end{equation}
Lower values indicate closer agreement with the held-out empirical
action-level signal. Removing the step component can increase this error
when the reference action advantages are nonzero. The reference itself is
estimated from finitely many rows, so this is a proxy error, not an error
against true action values or ground-truth advantages.

\paragraph{Aggregation and interpretation.}
For each metric, we average over the 32 splits within each group and then
equally over the 3,666 groups. Figure~\ref{fig:webshop-credit-diagnostics}(c,d)
reports $100\,\overline{\mathcal V}(c)/\overline{\mathcal V}(1)$ and
$100\,\overline{\mathcal E}(c)/\overline{\mathcal E}(1)$, so each metric
has its own GiGPO baseline of 100. Before this GiGPO-relative rescaling,
the aggregate diagnostic values for GiGPO and AdaStep are 0.1576 and 0.0564 for within-action
variation, and 0.3313 and 0.2180 for reference MSE. These are descriptive
results on historical samples. Rows may share trajectories, and groups may
recur across training updates; the repeated splits are not independent
training seeds. We therefore do not attach independent-seed confidence
intervals or interpret the proxy error as a direct measure of downstream
policy performance.

\section{Limitations and Future Work}
\label{sec:limitations}

AdaStep estimates its adaptive coefficient from the return statistics of the samples that share an anchor state, so it relies on constructing step-level state groups across multiple rollouts.
In the three benchmarks studied here, the environment state is discrete and exactly reproduced, so identical states can be matched directly and the groups required for estimation are readily available.
In more general settings, such as continuous, partially observable, or open-ended environments, the same situation is rarely revisited verbatim, and how to define and construct state groups that remain suitable for the variance-ratio estimate is an open question.
We therefore leave the design of state abstractions or learned grouping criteria that preserve the reliability interpretation of $w^{*}(\boldsymbol{s})$ to future work.

\end{document}